\documentclass{article}

\newif\ifarxiv
\arxivtrue

\PassOptionsToPackage{numbers,compress}{natbib}
\ifarxiv
  \usepackage[preprint]{main}
\else
  \usepackage{main}
\fi

\usepackage{graphicx}
\usepackage[table]{xcolor}

\usepackage{tikz}
\usetikzlibrary{
    arrows.meta,
    positioning,
    fit,
    calc,
    decorations.pathreplacing
}

\usepackage[utf8]{inputenc} 
\usepackage[T1]{fontenc}    
\usepackage{hyperref}       
\usepackage{url}            
\usepackage{booktabs}       
\usepackage{longtable}      
\usepackage{amsfonts}       
\usepackage{nicefrac}       
\usepackage{microtype}      
\usepackage{xcolor}         
\usepackage{graphicx}
\usepackage{amsmath}
\usepackage{amssymb}
\usepackage{amsthm}
\usepackage{enumitem}       
\usepackage{pifont}         
\newcommand{\cmark}{\ding{51}}

\newtheorem{theorem}{Theorem}

\newtheorem{corollary}{Corollary}

\newtheorem{proposition}{Proposition}
\newtheorem{remark}{Remark}

\usepackage[nottoc]{tocbibind}
\usepackage{minitoc}
\usepackage{bbm}

\newlength{\paraSpace}
\makeatletter
\renewcommand{\paragraph}{\@startsection{paragraph}{4}{\z@}%
  {\paraSpace}{-1em}%
  {\normalfont\normalsize\bfseries}}
\makeatother

\newlength{\figpullup}
\ifarxiv
\title{PRISM\hspace{0.05cm}\raisebox{-.15\height}{\includegraphics[scale=0.02]{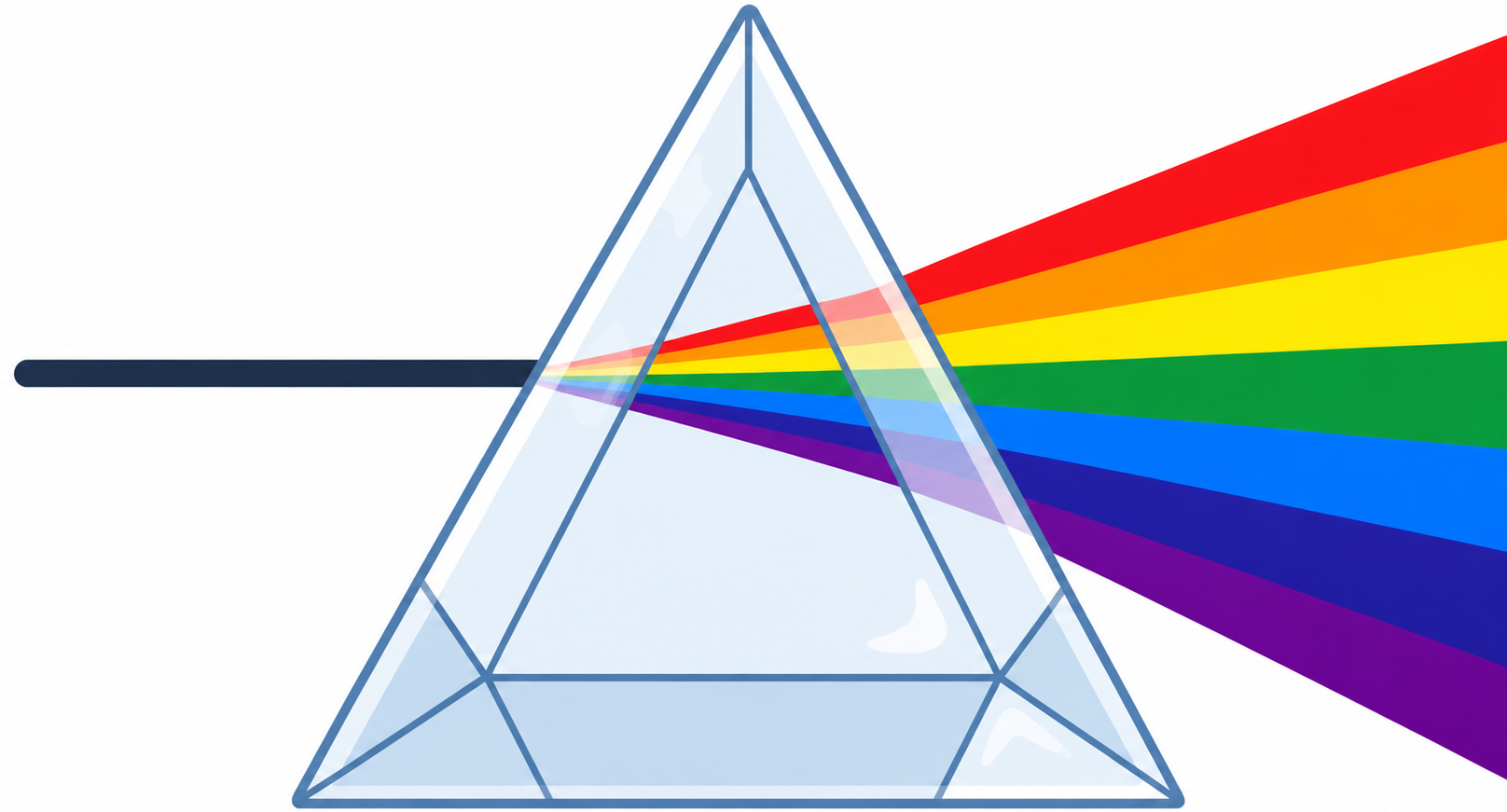}}: A Geometric Risk Bound that Decomposes Drift into Scale, Shape, and Head}
\else
\title{PRISM\hspace{0.05cm}\raisebox{-.15\height}{\includegraphics[scale=0.02]{figures/prism_5.png}}: Decomposing Geometric Drift \\
for Risk Diagnosis in LLM Variants}
\fi

\ifarxiv
\author{
Chieh-Yen Lin$^{1}$ \quad 
\textbf{Shao-Hua Sun$^{1,2}$}\\
  $^1$Appier AI Research, $^2$National Taiwan University \\
  \texttt{cylin@appier.com}
}
\else
\author{%
}
\fi

\begin{document}
\doparttoc 
\faketableofcontents 

\parskip=5pt

\maketitle

\begin{abstract}
Comparing post-training LLM variants---quantized, LoRA-adapted, distilled---needs a diagnostic that pinpoints \emph{how} the variant has drifted, not just \emph{that} it has; existing similarity scores (CKA, SVCCA) flag degradation without linking to risk or mechanism.
We propose \textbf{PRISM} (Proxy Risk Inference via Structural Mapping), exploiting LLMs' linear head and empirically near-isometric backbone to derive a closed-form upper bound on the cross-entropy risk gap $|\mathcal{R}_T-\mathcal{R}_P|$, calibrated for variant ranking, that decomposes drift into three independently measurable axes: \emph{scale mismatch}, \emph{shape mismatch}, and a \emph{head divergence}.
Each axis maps to a distinct failure mode (shape distortion at low-bit quantization, scale separability under LoRA forgetting, head divergence at GGUF k-quant), so the dominant axis points to a remediation direction rather than a flag.
Because the shape term is differentiable, the same geometry doubles as a training-time regularizer against catastrophic forgetting.
Across two model families and five benchmarks, PRISM ranks variants with mean Spearman $r_s{=}0.820$ (PTQ) and $0.831$ (LoRA forgetting), and the axis-guided shape regularizer outperforms experience replay in aggregate at mitigating downstream forgetting.
\end{abstract}

\section{Introduction}

As large language models (LLMs) move from pre-training to deployment, a new bottleneck emerges: a single base model now produces many post-training variants---quantized (GPTQ~\cite{frantar2022gptq}, GGUF, BitsAndBytes~\cite{dettmers2022gpt3}), LoRA-adapted~\cite{hu2022lora}, or distilled~\cite{hinton2015distilling}---that must be evaluated before release~\cite{grattafiori2024llama, qwen2025qwen3}. Existing evaluation largely relies on aggregate accuracy or perplexity, which can reveal that a variant has degraded but not \emph{why}. As a result, developers often resort to costly trial-and-error when debugging low-bit quantization failures, catastrophic forgetting, or prediction-head corruption. What is missing is a diagnostic that not only predicts degradation, but also identifies which component of the model has drifted from the base checkpoint.

A natural alternative is to compare internal representations directly. Prior representational similarity methods---SVCCA~\cite{raghu2017svcca}, CKA~\cite{kornblith2019similarity}, and generalized shape metrics---summarize two feature matrices with a single alignment score, which is descriptive but not diagnostic. More fundamentally, lifting any such score into a deployment-time risk bound faces three obstacles: (i) no prior similarity has been tied to downstream cross-entropy risk on the deployed prediction head~\cite{ding2021grounding, davari2023reliability, klabunde2023similarity}; (ii) cross-entropy is Lipschitz in features, but the naive constant scales with the head's full spectral norm and is uninformative at LLM vocabulary scale ($V\sim 10^5$); and (iii) a usable diagnostic must further decompose the bound into geometrically interpretable axes that map to distinct failure mechanisms.

Our key insight is that two structural properties of modern LLMs jointly resolve all three obstacles. The linear \texttt{lm\_head} over a non-linear backbone lets us derive a sharper Lipschitz constant tied to pairwise token-embedding distances rather than the head's spectral norm, keeping the bound informative at LLM vocabulary scale (Sec.~\ref{subsec:unified_bound}). With the near-isometry of hidden representations across related LLMs~\cite{park2023linear, moschella2022relative}, an orthogonal alignment yields a Procrustes residual that decomposes the feature error \emph{exactly} into scale and shape axes. Together with a covariance-weighted head-side term, these produce \textbf{PRISM} (\textbf{P}roxy \textbf{R}isk \textbf{I}nference via \textbf{S}tructural \textbf{M}apping; Fig.~\ref{fig:prism_concept}), a closed-form upper bound on the cross-entropy risk gap between a target model and its variant.

PRISM exposes three diagnostic axes: \emph{scale mismatch} $\Delta \rho$ (activation magnitude collapse), \emph{shape mismatch} $1-\Omega$ (feature-geometry distortion), and a covariance-weighted \emph{head discrepancy} $\gamma$ (prediction-head divergence). Unlike scalar similarity scores, these axes correspond to distinct empirical failure modes: low-bit quantization primarily induces shape distortion, LoRA fine-tuning yields activation scale separability, and output-projection quantization inflates head divergence.

Beyond post-hoc diagnosis, PRISM also provides a training signal. Under frozen-head LoRA fine-tuning, the head discrepancy term vanishes, making the differentiable shape term a clean regularization target for backbone drift. This allows us to regularize fine-tuning directly by penalizing feature-geometry distortion, reducing catastrophic forgetting without replay (Sec.~\ref{subsec:shape_reg_exp}).
Empirically, across Llama-, Qwen-, Ministral-, and DeepSeek-based variants and five benchmarks, PRISM consistently tracks degradation across both post-training quantization (PTQ) and LoRA fine-tuning. The bound achieves mean Spearman correlations of $0.820$ for quantized variants and $0.831$ for LoRA forgetting; beyond ranking, its axes separate distinct failure modes. Moreover, the proposed axis-guided shape regularizer outperforms replay-based baselines in mitigating forgetting.

Our contributions are threefold.
\begin{enumerate}[leftmargin=24pt,itemsep=2pt, topsep=2pt]
    \item \textbf{Theory: a closed-form CE risk bound with three diagnostic axes.} $|\mathcal{R}_T - \mathcal{R}_P|$ admits a closed-form upper bound as a sum of three axes: scale $(\Delta\rho)^2$ and shape $2\rho_T\rho_P(1{-}\Omega)$ (from an exact Procrustes residual decomposition), plus a covariance-weighted head-discrepancy term (Theorem~\ref{thm:unified_bound}; Fig.~\ref{fig:prism_concept}).

    \item \textbf{Framework: a unified diagnostic that doubles as a training objective.} Computed from features and head weights alone, the bound applies across both PTQ and frozen-head LoRA; the differentiable trace form $\Omega$ further turns it into a training-time regularizer against catastrophic forgetting (Sec.~\ref{subsec:action}).

    \item \textbf{Empirical: rank-consistent across both settings, with axis-level localization.} On Llama and Qwen (Ministral, DeepSeek in Appendix~\ref{app:per_model_tables}) over five benchmarks, the bound ranks PTQ variants ($r_s{=}0.820$) and LoRA checkpoints ($r_s{=}0.831$) consistently (\emph{predictiveness}); its axes separate failure modes---shape distortion at Q2/Q3 PTQ, scale separability under cross-task LoRA drift, head divergence at Qwen3 Q6\_K/Q8\_0 (\emph{decomposability}); and an axis-guided shape regularizer outperforms experience replay in aggregate at mitigating downstream forgetting (\emph{actionability}).
\end{enumerate}

\section{Related work}

\paragraph{Representational similarity and Procrustes shape metrics.}
Representational similarity statistics---SVCCA~\cite{raghu2017svcca}, CKA~\cite{kornblith2019similarity}, and the generalized shape metrics framework~\cite{williams2021generalized}---measure geometric resemblance between activation matrices, but their link to downstream behavior remains an open problem~\cite{ding2021grounding, davari2023reliability, klabunde2023similarity}; a recent decodability bound~\cite{harvey2024what} reaches downstream via whitened kernels and newly-trained linear probes rather than the deployed prediction heads. The Linear Representation Hypothesis~\cite{park2023linear} and empirical relative-representation isometry across same-family encoders~\cite{moschella2022relative} together motivate restricting alignment to the orthogonal group, and the Platonic Representation Hypothesis~\cite{huh2024platonic} corroborates same-family shared geometry. PRISM lifts the Procrustes residual into a closed-form CE risk bound on the deployed head; the resulting bound splits into separately measurable scale, shape, and head components (Sec.~\ref{sec:geometric_foundation}).

\paragraph{Post-training quantization and its evaluation.}
Post-training quantization spans calibration-based methods (GPTQ~\cite{frantar2022gptq}, second-order reconstruction) and weight-only schemes (GGUF k-quants, BitsAndBytes~\cite{dettmers2022gpt3}), with low-bit performance often constrained by activation outliers~\cite{xiao2023smoothquant}. Existing cheap diagnostics---layer-wise reconstruction loss or weight quantization error---are indirect proxies that do not account for nonlinear error accumulation through the full network. PRISM, in contrast, measures the resulting drift end-to-end on the deployed model in a single forward pass (Sec.~\ref{subsec:quant_exp}).

\paragraph{Catastrophic forgetting under parameter-efficient fine-tuning.}
LoRA fine-tuning~\cite{hu2022lora} risks catastrophic forgetting of pre-training knowledge, even with a frozen \texttt{lm\_head}. Existing remedies fall into two families: weight-space constraints (EWC~\cite{kirkpatrick2017overcoming}) and data-space rehearsal (experience replay). PRISM's shape regularizer adds a third option---a feature-geometry constraint identifiable from the bound's decomposition: feature-manifold drift ($\Omega$, $\Delta\rho$) relative to the frozen base bounds the downstream risk gap and is usable as a post-hoc diagnostic and a training-time penalty (Sec.~\ref{subsec:shape_reg}).

\paragraph{Efficient evaluation and performance forecasting.}
Scaling laws~\cite{kaplan2020scaling, hoffmann2022training} predict aggregate loss but not variant-specific degradation; efficient-evaluation subsets~\cite{polo2024tinybenchmarks, perlitz2024efficient} preserve variant orderings without explaining them; weak-to-strong generalization~\cite{burns2023weak} targets label generation rather than risk estimation. PRISM instead bounds the variant-vs-base risk gap directly from a feature comparison---not relying on benchmark labels or aggregate-loss extrapolation (Sec.~\ref{sec:geometric_foundation}).

\begin{figure}[t]
\vspace*{\figpullup}
\centering
\setlength{\abovecaptionskip}{3pt}
\setlength{\belowcaptionskip}{1pt}

\colorlet{Tcol}{blue!65!black}
\colorlet{Pcol}{orange!80!black}
\colorlet{Acol}{violet!65!black}
\colorlet{panelcol}{black!38}

\begin{tikzpicture}[
    x=1cm, y=1cm,
    font=\sffamily\footnotesize,
    >={Stealth[length=1.9mm,width=1.5mm]},
    line cap=round, line join=round,
    blk/.style={draw=black!55, line width=0.7pt, rounded corners=2pt, fill=white,
                minimum height=0.55cm, minimum width=1.10cm, inner sep=1pt, align=center,
                font=\sffamily\scriptsize},
    blkT/.style={blk, draw=Tcol, fill=Tcol!5},
    blkP/.style={blk, draw=Pcol, fill=Pcol!6},
    arrT/.style={->, line width=0.85pt, Tcol},
    arrP/.style={->, line width=0.85pt, Pcol},
    termbox/.style={draw=panelcol, line width=0.6pt, rounded corners=3pt, fill=white,
                    minimum width=4.20cm, minimum height=1.75cm, align=center, inner sep=2pt}
]

\useasboundingbox (0, -0.05) rectangle (14, 4.45);

\begin{scope}[xshift=0.4cm]

\node[font=\tiny\bfseries, text=black!50] at (2.05, 4.08) {Backbone};
\node[font=\tiny\bfseries, text=black!50] at (3.70, 4.08) {Features};
\node[font=\tiny\bfseries, text=black!50] at (5.35, 4.08) {Head};
\node[font=\tiny\bfseries, text=black!50] at (7.00, 4.08) {Risk};

\node[anchor=east, text=Tcol, font=\scriptsize\bfseries] at (0.98, 3.60) {Target~$T$};
\node[anchor=east, text=Pcol, font=\scriptsize\bfseries] at (0.98, 2.50) {Proxy~$P$};

\node[align=center, font=\tiny\bfseries] (in) at (0.35, 3.05)
    {Input\\$(x,y){\sim}\mathcal{D}$};

\node[blkT] (phiT) at (2.05, 3.60) {$\phi_T$};
\node[blkT] (ZT)   at (3.70, 3.60) {$Z_T$};
\node[blkT] (HT)   at (5.35, 3.60) {$H_T$};
\node[blkT] (RT)   at (7.00, 3.60) {$\mathcal{R}_T$};
\draw[arrT] (phiT) -- (ZT);
\draw[arrT] (ZT) -- (HT);
\draw[arrT] (HT) -- (RT);

\node[blkP] (phiP) at (2.05, 2.50) {$\phi_P$};
\node[blkP] (ZP)   at (3.70, 2.50) {$Z_P$};
\node[blkP] (HP)   at (5.35, 2.50) {$H_P$};
\node[blkP] (RP)   at (7.00, 2.50) {$\mathcal{R}_P$};
\draw[arrP] (phiP) -- (ZP);
\draw[arrP] (ZP) -- (HP);
\draw[arrP] (HP) -- (RP);

\coordinate (inJ) at (1.10, 3.05);
\draw[line width=0.7pt, black!60] (in.east) -- (inJ);
\draw[arrT] (inJ) |- (phiT.west);
\draw[arrP] (inJ) |- (phiP.west);

\draw[<->, line width=0.85pt, Acol, densely dashed]
    (ZP.north) -- node[midway, anchor=west, xshift=1pt,
                       font=\tiny\bfseries, text=Acol] {$W$}
    (ZT.south);

\draw[<->, line width=0.85pt, Acol, densely dashed]
    (HT.south) -- node[midway, anchor=west, xshift=1pt,
                       font=\tiny\bfseries, text=Acol] {$W$}
    (HP.north);

\draw[decoration={brace, mirror, amplitude=4pt}, decorate,
      Acol, line width=0.85pt]
    ([xshift=8pt]RT.north east) -- ([xshift=8pt]RP.south east);

\node[anchor=north west, font=\scriptsize, inner sep=1pt] at (8.0, 4.10)
{$\begin{aligned}
{\color{Acol}\Delta\mathcal{R}}&\;=\;{\color{Acol}|\mathcal{R}_T{-}\mathcal{R}_P|} \;\le\; \delta \;+\; \gamma \\[1pt]
\delta &\;=\; K_{\mathrm{feat}}\,\sqrt{\,\smash[b]{\underbrace{(\Delta\rho)^{2}}_{\text{scale}} \;+\; \underbrace{2\rho_T\rho_P(1{-}\Omega)}_{\text{shape}}}\,} \\[13pt]
\gamma &\;=\; K_{\mathrm{pred}}\,\bigl\|\Sigma_P^{1/2}\underbrace{(WH_T{-}H_P)}_{\Delta H}\bigr\|_F
\end{aligned}$};

\end{scope}


\node[termbox, anchor=south west] (sbox) at (0.55, 0.02) {};
\node[anchor=north, font=\scriptsize\bfseries]
    at ([yshift=-0.10cm]sbox.north) {Scale mismatch \,$(\rho_T{-}\rho_P)^{2}$};
\coordinate (sC) at ($(sbox.center)+(0,-0.22)$);
\draw[Tcol, line width=0.95pt, dashed, fill=Tcol!7] (sC) circle (0.55);
\draw[Pcol, line width=0.95pt, dashed, fill=Pcol!15] (sC) circle (0.30);
\draw[->, line width=0.7pt, Tcol] (sC) -- ++(195:0.55);
\draw[->, line width=0.7pt, Pcol] (sC) -- ++(0:0.30);
\node[Tcol, font=\tiny\bfseries, anchor=east]
    at ($(sC)+(195:0.60)$) {$\rho_T$};
\node[Pcol, font=\tiny\bfseries, anchor=west]
    at ($(sC)+(0.60,0.08)$) {$\rho_P$};

\node[termbox, anchor=south west] (hbox) at (4.90, 0.02) {};
\node[anchor=north, font=\scriptsize\bfseries]
    at ([yshift=-0.10cm]hbox.north) {Shape mismatch \,$2\rho_T\rho_P(1{-}\Omega)$};
\coordinate (hC) at ($(hbox.center)+(0,-0.10)$);
\filldraw[Tcol, line width=0.85pt, fill=Tcol!10]
    ($(hC)+(-1.30,-0.22)$)
    .. controls +(0.16,0.38) and +(-0.20,0.26) ..
    ($(hC)+(-0.77,0.10)$)
    .. controls +(0.22,-0.14) and +(-0.08,0.20) ..
    ($(hC)+(-0.69,-0.38)$)
    .. controls +(-0.32,0.02) and +(0.22,-0.08) .. cycle;
\filldraw[Pcol, line width=0.85pt, fill=Pcol!12]
    ($(hC)+(0.61,-0.17)$)
    .. controls +(0.15,0.34) and +(-0.20,0.18) ..
    ($(hC)+(1.21,0.13)$)
    .. controls +(0.08,-0.20) and +(0.12,0.22) ..
    ($(hC)+(1.32,-0.37)$)
    .. controls +(-0.32,0.0) and +(0.20,-0.06) .. cycle;
\draw[<->, line width=0.8pt, Acol, densely dashed]
    ($(hC)+(-0.55,-0.10)$) -- node[midway, above=-2pt,
         font=\tiny, text=Acol] {$\Omega$}
    ($(hC)+(0.48,-0.10)$);
\node[Tcol, font=\tiny\bfseries, anchor=north]
    at ($(hC)+(-0.97,-0.36)$) {$Z_T$};
\node[Pcol, font=\tiny\bfseries, anchor=north]
    at ($(hC)+(1.00,-0.36)$) {$Z_PW$};

\node[termbox, anchor=south west] (dbox) at (9.25, 0.02) {};
\node[anchor=north, font=\scriptsize\bfseries]
    at ([yshift=-0.10cm]dbox.north) {Head discrepancy \,$\|\Sigma_P^{1/2}\Delta H\|_F$};
\coordinate (dC) at ($(dbox.center)+(-0.05,-0.52)$);
\draw[draw=black!50, line width=0.65pt, fill=black!10]
    (dC) ellipse (1.00 and 0.24);
\node[font=\tiny, text=black!55]
    at ($(dC)+(0.72,-0.02)$) {$\Sigma_P$};
\fill[black] (dC) circle (1pt);
\draw[->, line width=0.95pt, Tcol] (dC) -- ++(-0.48, 0.62);
\draw[->, line width=0.95pt, Pcol] (dC) -- ++(0.66, 0.42);
\node[Tcol, font=\tiny\bfseries, anchor=east]
    at ($(dC)+(-0.50,0.62)$) {$WH_T$};
\node[Pcol, font=\tiny\bfseries, anchor=west]
    at ($(dC)+(0.68,0.42)$) {$H_P$};
\draw[<->, line width=0.85pt, Acol, dashed]
    ($(dC)+(-0.48,0.62)$) -- node[midway, above=-2pt,
         font=\tiny\bfseries, text=Acol] {$\Delta H$}
    ($(dC)+(0.66,0.42)$);

\end{tikzpicture}
\caption{\textbf{PRISM} (\textbf{P}roxy \textbf{R}isk \textbf{I}nference via \textbf{S}tructural \textbf{M}apping) \textbf{decomposition of the risk gap.} For any orthogonal alignment $W \in \mathcal{O}(d)$, the cross-entropy risk gap $|\mathcal{R}_T{-}\mathcal{R}_P|$ is bounded (Thm.~\ref{thm:unified_bound}) by a feature alignment error $\delta$---decomposed exactly into \emph{scale mismatch} $(\Delta\rho)^{2}$ and \emph{shape mismatch} $2\rho_T\rho_P(1{-}\Omega_W)$ (Prop.~\ref{prop:exact_decomposition})---plus a head discrepancy $\gamma{=}K_{\mathrm{pred}}\|\Sigma_P^{1/2}\Delta H\|_F$ where $\Delta H{=}WH_T{-}H_P$. The main text uses the identity alignment $W{=}I$ ($\Omega$), under which $\gamma$ vanishes whenever $H_T{=}H_P$ (frozen-head LoRA, FP16-head PTQ); both $W{=}I$ and the Procrustes-optimal $W{=}W_N$ yield strong rank correlations (Sec.~\ref{subsec:ablation}). Each axis localizes a distinct empirical regime: shape distortion at low-bit PTQ, head divergence at GGUF k-quant tiers that quantize \texttt{lm\_head}, and scale-axis separability under LoRA forgetting (Sec.~\ref{subsec:decompose}).}
\label{fig:prism_concept}
\end{figure}

\section{A geometric bound on the cross-entropy risk gap}
\label{sec:geometric_foundation}

\textbf{PRISM} (\textbf{P}roxy \textbf{R}isk \textbf{I}nference via \textbf{S}tructural \textbf{M}apping)---developed in this section---is a closed-form upper bound on the cross-entropy risk gap $|\mathcal{R}_T - \mathcal{R}_P|$ that decomposes into three diagnostic axes (scale, shape, head). We build it from two structural properties of LLMs: a linear \texttt{lm\_head} on a non-linear backbone, and the Linear Representation Hypothesis.

\subsection{Setup}
\label{subsec:problem_setup}

Let $T$ be a \emph{Target model} (e.g., a full-precision base) and $P$ a \emph{Proxy model} (e.g., a quantized or fine-tuned variant), sharing hidden dimension $d$. Each model factors into a Transformer backbone $\phi_M: \mathcal{X} \to \mathbb{R}^d$ followed by a linear prediction head $H_M \in \mathbb{R}^{d \times V}$ (the \texttt{lm\_head}, vocabulary size $V$). The risk of $M$ under data distribution $\mathcal{D}$ is the cross-entropy expectation
\begin{equation}
\label{eq:ce_def}
\mathcal{R}_M \;=\; \mathbb{E}_{(x,y) \sim \mathcal{D}}\!\big[\ell\big(\phi_M(x) H_M,\, y\big)\big], \qquad \ell(v, y) = -v_y + \log \textstyle\sum_{j=1}^{V} e^{v_j}, \; v \in \mathbb{R}^V.
\end{equation}
Stacking $\phi_M$ outputs on shared inputs as rows, we form $Z_T, Z_P \in \mathbb{R}^{n \times d}$ and define the RMS feature scale $\rho_M = \|Z_M\|_F/\sqrt{n}$ and the empirical (uncentered) covariance $\Sigma_P = Z_P^\top Z_P/n$. The empirical isometry across model representations~\cite{moschella2022relative}, consistent with the Linear Representation Hypothesis~\cite{park2023linear}, motivates restricting attention to orthogonal alignments $W \in \mathcal{O}(d)$ between the two backbones; the PRISM bound (Theorem~\ref{thm:unified_bound}) holds for any such $W$, with alignment quality determining how tight the bound is in practice.

\subsection{The unified risk bound}
\label{subsec:unified_bound}

To bound $|\mathcal{R}_T - \mathcal{R}_P|$ we introduce a hybrid risk $\mathcal{R}_{P \to T} := \mathbb{E}[\ell(\phi_P(x)\,W \cdot H_T, y)]$ that evaluates the target's head on the proxy's aligned features. The triangle inequality $|\mathcal{R}_T - \mathcal{R}_P| \le |\mathcal{R}_T - \mathcal{R}_{P\to T}| + |\mathcal{R}_{P\to T} - \mathcal{R}_P|$ splits the gap into two components, each upper-bounded by a closed-form geometric quantity: a \emph{feature alignment error} $\delta$ and a \emph{head discrepancy} $\gamma$ (full derivations in Appendix~\ref{app:detailed_proofs}).

\paragraph{Feature error $\delta$.}
The cross-entropy loss is Lipschitz in features with constant $K_{\mathrm{feat}} = \max_{j,k}\|h_{T,j}-h_{T,k}\|_2$, where $h_{T,j}$ is the $j$-th column of $H_T$ (simplex polarization, Appendix~\ref{app:kfeat}; substantially tighter than the naive spectral bound $\sqrt{2}\|H_T\|_2$ that scales with vocabulary). The alignment residual $\|Z_T - Z_P W\|_F^2 / n$ admits an exact identity for every orthogonal $W$:

\begin{proposition}[Exact Scale--Shape Decomposition]
\label{prop:exact_decomposition}
For any $W \in \mathcal{O}(d)$,
\begin{equation}
\label{eq:exact_equality}
\frac{1}{n} \| Z_T - Z_P W \|_F^2
\;=\;
\underbrace{(\rho_T - \rho_P)^2}_{\text{\rm Scale Mismatch } (\Delta\rho)^2}
\;+\;
\underbrace{2 \rho_T \rho_P \big( 1 - \Omega_W \big)}_{\text{\rm Shape Mismatch}},
\end{equation}
where $\Omega_W$ is the \emph{trace Procrustes similarity}:
\begin{equation}
\label{eq:omega_def}
\Omega_W(Z_T, Z_P) \;:=\; \frac{\operatorname{Tr}(Z_T^\top Z_P\, W)}{\| Z_T \|_F\, \| Z_P \|_F} \;\in [-1,\, 1].
\end{equation}
\end{proposition}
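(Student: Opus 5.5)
We would prove the identity by expanding the squared Frobenius norm through the trace inner product $\langle A,B\rangle_F=\operatorname{Tr}(A^\top B)$ and then regrouping the terms. The only structural facts needed are that right-multiplication by an orthogonal $W$ preserves the Frobenius norm, and the cyclic property of the trace.

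\textbf{Step 1 (expansion).} Writing $\|Z_T - Z_P W\|_F^2 = \|Z_T\|_F^2 + \|Z_P W\|_F^2 - 2\operatorname{Tr}(Z_T^\top Z_P W)$, we use $W W^\top = I$ to get
\[
\|Z_P W\|_F^2=\operatorname{Tr}(W^\top Z_P^\top Z_P W)=\operatorname{Tr}(Z_P^\top Z_P)=\|Z_P\|_F^2 .
\]
Dividing by $n$ and substituting $\|Z_M\|_F^2 = n\rho_M^2$ gives
\[
\tfrac1n\|Z_T - Z_P W\|_F^2=\rho_T^2+\rho_P^2-\tfrac{2}{n}\operatorname{Tr}(Z_T^\top Z_P W).
\]

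\textbf{Step 2 (rewriting the cross term).} When $\rho_T\rho_P>0$, the definition of $\Omega_W$ gives
\[
\operatorname{Tr}(Z_T^\top Z_P W)=\Omega_W\|Z_T\|_F\|Z_P\|_F = n\rho_T\rho_P\Omega_W .
\]
The right-hand side of the previous display therefore equals $\rho_T^2+\rho_P^2-2\rho_T\rho_P\Omega_W$. Adding and subtracting $2\rho_T\rho_P$ yields $(\rho_T-\rho_P)^2 + 2\rho_T\rho_P(1-\Omega_W)$, which is \eqref{eq:exact_equality}.

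\textbf{Step 3 (range of $\Omega_W$).} Cauchy--Schwarz for the Frobenius inner product, together with $\|Z_PW\|_F=\|Z_P\|_F$, gives $|\operatorname{Tr}(Z_T^\top Z_P W)|\le \|Z_T\|_F\|Z_P\|_F$. Hence $\Omega_W\in[-1,1]$.

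The only delicate point is the degenerate case where $Z_T=0$ or $Z_P=0$, since then $\Omega_W$ is undefined. We handle it by noting that the shape term carries the prefactor $\rho_T\rho_P=0$, so any convention for $\Omega_W$ makes the identity reduce to $\rho_T^2$ or $\rho_P^2$, which holds directly. Beyond this, the argument is routine algebra.
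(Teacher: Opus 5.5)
Your proof is correct and follows the paper's argument step for step: expand the Frobenius norm, use orthogonality and cyclicity of the trace to get $\|Z_PW\|_F=\|Z_P\|_F$, divide by $n$, and complete the square. Your Cauchy--Schwarz justification of $\Omega_W\in[-1,1]$ and your handling of the degenerate case $Z_T=0$ or $Z_P=0$ are small additions that the paper leaves implicit.
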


Combined with the Lipschitz bound (Appendix~\ref{app:detailed_proofs}), we define
\begin{equation}
\label{eq:delta_bound}
\delta \;:=\; K_{\mathrm{feat}} \sqrt{(\rho_T - \rho_P)^2 + 2\rho_T \rho_P (1 - \Omega_W)}, \qquad |\mathcal{R}_T - \mathcal{R}_{P\to T}| \le \delta.
\end{equation}

\paragraph{Head error $\gamma$.}
Let $\Delta H := W H_T - H_P$ for the alignment $W$; we define $\gamma := K_{\mathrm{pred}}\,\|\Sigma_P^{1/2}\Delta H\|_F$ with $K_{\mathrm{pred}} \le \sqrt{2}$ (Appendix~\ref{app:head_bound}), so that $|\mathcal{R}_{P\to T} - \mathcal{R}_P| \le \gamma$. The covariance weighting $\Sigma_P^{1/2}$ ensures that only head misalignment in the \emph{active subspace} (directions where the data has support) contributes to $\gamma$.

\begin{theorem}[Unified Risk Bound]
\label{thm:unified_bound}
For any $W \in \mathcal{O}(d)$,
\begin{equation}
\label{eq:unified_bound}
|\mathcal{R}_T - \mathcal{R}_P| \;\le\; \mathcal{B} \;:=\; \underbrace{K_{\mathrm{feat}} \sqrt{ (\rho_T - \rho_P)^2 + 2\rho_T \rho_P \big(1 - \Omega_W\big) }}_{\delta:\ \text{\rm feature alignment error}} \;+\; \underbrace{K_{\mathrm{pred}} \,\| \Sigma_P^{1/2} (W H_T - H_P) \|_F}_{\gamma:\ \text{\rm head discrepancy}}.
\end{equation}
We refer to $\mathcal{B} = \delta + \gamma$ as the \emph{PRISM bound}.
\end{theorem}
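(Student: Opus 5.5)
The plan is to follow the triangle-inequality split through the hybrid risk $\mathcal{R}_{P\to T}$ and bound the two pieces separately:
\[
|\mathcal{R}_T-\mathcal{R}_P|\le|\mathcal{R}_T-\mathcal{R}_{P\to T}|+|\mathcal{R}_{P\to T}-\mathcal{R}_P|.
\]
Throughout, the expectation over $\mathcal{D}$ is read as the empirical average over the $n$ shared inputs stacked into $Z_T,Z_P$. Without this convention, $\rho_M$ and $\Sigma_P$ would be replaced by their population analogues, and the argument would be unchanged.

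\textbf{Feature term.} First I show that $v\mapsto\ell(v,y)$ satisfies $|\ell(zH_T,y)-\ell(z'H_T,y)|\le K_{\mathrm{feat}}\|z-z'\|_2$ for $z,z'\in\mathbb{R}^d$. The gradient of $\ell(zH_T,y)$ in $z$ is $H_T(p-e_y)$, where $p$ is the softmax vector. Since $p$ and $e_y$ both lie in the simplex, $H_T(p-e_y)$ is a difference of two convex combinations of the columns $h_{T,j}$. Its norm is therefore at most $\max_{j,k}\|h_{T,j}-h_{T,k}\|_2=K_{\mathrm{feat}}$, and the mean value theorem along the segment from $z$ to $z'$ gives the Lipschitz bound.

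Next I apply this with $z=\phi_T(x)$ and $z'=\phi_P(x)W$, average, and use Jensen (Cauchy--Schwarz). This gives
\[
|\mathcal{R}_T-\mathcal{R}_{P\to T}|\le K_{\mathrm{feat}}\big(\tfrac1n\|Z_T-Z_PW\|_F^2\big)^{1/2}.
\]
Finally I substitute the exact identity of Proposition~\ref{prop:exact_decomposition}. That identity itself follows by expanding the square: orthogonality of $W$ gives $\|Z_PW\|_F=\|Z_P\|_F$, and the cross term is $-2\operatorname{Tr}(Z_T^\top Z_PW)/n=-2\rho_T\rho_P\Omega_W$. The result is $\delta$.

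\textbf{Head term.} Here I fix the feature $u=\phi_P(x)W$, so $\phi_P(x)=uW^\top$. The two logit vectors being compared are $uH_T$ and $\phi_P(x)H_P=uW^\top H_P$. The plan is to rewrite both in $P$'s coordinates as $\phi_P(x)(WH_T)$ and $\phi_P(x)H_P$; this uses $uH_T=\phi_P(x)WH_T$. Their difference is then $\phi_P(x)\Delta H$.

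The loss $\ell(\cdot,y)$ is Lipschitz in the logit vector $v$ with respect to $\|\cdot\|_2$. Its gradient is $p-e_y$, whose norm is at most $\sqrt2$, so the per-sample error is at most $\sqrt2\|\phi_P(x)\Delta H\|_2$. Averaging and applying Jensen gives
\[
\big(\tfrac1n\sum_i\|z_{P,i}\Delta H\|_2^2\big)^{1/2}=\big(\operatorname{Tr}(\Delta H^\top\Sigma_P\Delta H)\big)^{1/2}=\|\Sigma_P^{1/2}\Delta H\|_F.
\]
The bound therefore holds with $K_{\mathrm{pred}}\le\sqrt2$, which is $\gamma$. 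Summing the two pieces yields $\mathcal{B}=\delta+\gamma$.

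The main obstacle I expect is the head step's change of coordinates. One has to check that the hybrid risk, as defined with the target head acting on aligned proxy features, really matches logits $\phi_P(x)WH_T$ rather than $\phi_P(x)W^\top H_T$. Otherwise $\Delta H$ would need to be $W^\top H_T-H_P$, or the convention for $W$ acting on row versus column vectors must be adjusted. I would settle this convention first, with $W$ acting on rows, and if needed restate $\mathcal{R}_{P\to T}$ accordingly; the Lipschitz estimates themselves are routine.
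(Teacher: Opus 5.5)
Your proposal is correct and follows the paper's proof essentially step for step: the hybrid-risk triangle split, the pairwise-diameter Lipschitz constant (your ``difference of two convex combinations'' argument is the paper's simplex polarization $\sum_{j\neq y}\hat p_j(h_{T,j}-h_{T,y})$), Jensen plus the exact scale--shape identity for $\delta$, and the $\sqrt{2}$ logit-Lipschitz bound with $\mathbb{E}[z^\top z]=\Sigma_P$ for $\gamma$. The convention you flag is not an obstacle, because the paper defines $\mathcal{R}_{P\to T}$ with row features and logits $\phi_P(x)WH_T$, so $\Delta H = WH_T - H_P$ follows directly; your explicit reading of $\mathbb{E}$ as the empirical average over the $n$ stacked inputs is the same identification the paper makes without stating it.
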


\subsection{Three diagnostic axes}
\label{subsec:interpretation}

Theorem~\ref{thm:unified_bound} (illustrated in Fig.~\ref{fig:prism_concept}) decomposes the risk gap along three \emph{independently measurable} axes, each attached to a distinct failure mode of the proxy.

\paragraph{Scale mismatch $\Delta\rho$.}
Divergence in activation magnitude directly amplifies the feature error. Aggressive bit-width reduction clips activation outliers~\cite{dettmers2022gpt3, xiao2023smoothquant}, shrinking $\rho_P < \rho_T$; the scale axis isolates this \emph{scale collapse}.

\paragraph{Shape mismatch $1-\Omega_W$.}
The shape term captures geometric distortion of the feature manifold beyond scale. A drop in $\Omega_W$ signals that the relative arrangement of token representations has been corrupted---what we term \emph{structural distortion}.

\paragraph{Head divergence $\|\Sigma_P^{1/2}(W H_T - H_P)\|_F$.}
The head term quantifies how differently the prediction heads interpret features, weighted by $\Sigma_P$ so that only directions where the data has support contribute.

\paragraph{PRISM applies for any orthogonal alignment.}
Theorem~\ref{thm:unified_bound} holds for every $W \in \mathcal{O}(d)$, yielding a family $\{\Omega_W\}$ of similarity scores. We evaluate two natural specializations: the trace form $\Omega := \Omega_{W=I}$ (used as the main text default) and the Procrustes-optimal nuclear form $\Omega_N := \Omega_{W=W_N}$ (ablation in Sec.~\ref{subsec:ablation}; full derivation in Appendix~\ref{app:tightness}). $W{=}W_N$ minimizes the alignment residual but introduces nonzero $\gamma$ when $H_T \approx H_P$ and requires per-step SVD; $W{=}I$ keeps $\gamma$ at its minimum---vanishing when $H_T{=}H_P$ (frozen-head LoRA, FP16-head PTQ; GGUF k-quant keeps $\gamma{>}0$)---and is directly differentiable, enabling the regularizer of Sec.~\ref{subsec:shape_reg}. Both yield strong rank correlations (Sec.~\ref{subsec:ablation}); the choice is primarily design-driven.

\subsection{Formal extension to autoregressive generation}
\label{subsec:ar_extension}

The bound applies unchanged to sequence-level generation. For a sequence $(c, y)$ with context $c$ and target continuation $y = (y_1, \ldots, y_{|y|})$, the autoregressive risk is
\begin{equation}
\label{eq:ar_risk}
\mathcal{R}_M^{\mathrm{AR}} \;=\; \mathbb{E}_{(c, y) \sim \mathcal{D}} \!\left[ \frac{1}{|y|} \sum_{\tau=1}^{|y|} \ell\!\big(\phi_M(c, y_{<\tau}) \cdot H_M,\; y_\tau\big) \right].
\end{equation}
Under teacher forcing, the $|y|$ token-level features $\phi_M(c, y_{<\tau})$ of a sequence are extracted in a single forward pass and stacked into $Z_M^{\mathrm{AR}} \in \mathbb{R}^{|y| \times d}$; Theorem~\ref{thm:unified_bound} then applies directly to $(Z_T^{\mathrm{AR}}, Z_P^{\mathrm{AR}})$. The full corollary is in Appendix~\ref{app:ar_extension}.

\subsection{From diagnostic to training: shape regularization}
\label{subsec:shape_reg}

LoRA fine-tuning drives the backbone shape away from its base: $\Omega(Z_0, Z_t)$ drops as training proceeds, and---because the head is frozen---the full risk gap between base $\theta_0$ and checkpoint $\theta_t$ reduces to backbone drift alone,
\begin{equation}
\label{eq:lora_bound}
|\mathcal{R}_0 - \mathcal{R}_t| \;\le\; K_{\mathrm{feat}}\sqrt{(\rho_0-\rho_t)^2 + 2\rho_0\rho_t(1-\Omega)}.
\end{equation}
Constraining shape should therefore reduce backbone drift and mitigate catastrophic forgetting on downstream tasks. Since $1 - \Omega$ is differentiable in $Z_t$, we augment the fine-tuning objective with a \emph{shape regularizer}---a penalty on $1-\Omega$:
\begin{equation}
\label{eq:shape_reg}
L_{\mathrm{total}} \;=\; \underbrace{L_{\mathrm{CE}}\bigl(\theta_t;\,\mathcal{D}_{\mathrm{FT}}\bigr)}_{\text{task loss}} \;+\; \lambda\,\bigl(1 - \Omega\!\bigl(Z_0^{\text{ref}},\, Z_t^{\text{ref}}\bigr)\bigr),
\end{equation}
where $Z_0^{\text{ref}}, Z_t^{\text{ref}}$ are base and current feature matrices on a fixed reference set (training schedule in Sec.~\ref{subsec:exp_setup}); empirical effect is validated in Sec.~\ref{subsec:shape_reg_exp}.

\section{Applications: post-training quantization and LoRA forgetting}
\label{sec:applications}

The Unified Risk Bound (Theorem~\ref{thm:unified_bound}; Fig.~\ref{fig:prism_concept}) applies to two LLM lifecycle settings: post-training quantization (may engage all three axes) and frozen-head LoRA fine-tuning (backbone drift alone).

\paragraph{Quantization quality estimation.}
\label{subsec:quantization}
Target $T$ is the BF16 base model and proxy $P$ is the quantized variant. Weight-only PTQ perturbs weights without applying any basis transformation, so $W{=}I$ is the natural alignment; the head term $\gamma$ enters when the protocol also quantizes \texttt{lm\_head} (GGUF k-quant) and vanishes otherwise (GPTQ, BnB).

\paragraph{Geometric monitoring of catastrophic forgetting.}
\label{subsec:forgetting}
Target $T{=}\theta_0$ is the frozen base model and proxy $P{=}\theta_t$ a fine-tuned checkpoint, both evaluated on benchmarks \emph{distinct} from the fine-tuning task so that $|\mathcal{R}_T - \mathcal{R}_P|$ captures catastrophic forgetting. Standard LoRA~\cite{hu2022lora} keeps \texttt{lm\_head} frozen, so $\gamma{=}0$ and forgetting reduces to \emph{backbone geometric drift} $(\Delta\rho, 1{-}\Omega)$; the shape regularizer of Sec.~\ref{subsec:shape_reg} penalizes the shape term directly.

\section{Experiments}
\label{sec:experiments}

We organize the experiments around three claims: \textbf{Predictiveness}---the bound tracks $|\Delta\mathcal{R}|$ in rank order across both PTQ and LoRA variants (Sec.~\ref{subsec:predict}); \textbf{Decomposability}---its three axes localize three qualitatively distinct empirical failure modes (Sec.~\ref{subsec:decompose}); and \textbf{Actionability}---the differentiable shape arm doubles as a training-time regularizer that suppresses catastrophic forgetting (Sec.~\ref{subsec:action}). Sec.~\ref{subsec:ablation} closes with a component-wise ablation.

\subsection{Experimental setup}
\label{subsec:exp_setup}

\paragraph{Models.}
The main analysis spans two model families: Llama-3.1-8B~\cite{grattafiori2024llama} and Qwen3-8B~\cite{qwen2025qwen3}; PTQ replications on Ministral-3-8B~\cite{liu2026ministral3}, DeepSeek-R1-Distill-Llama-8B~\cite{deepseekai2025r1}, and three instruction-tuned counterparts (seven 8B families total) are in Appendix~\ref{app:per_model_tables}.

\paragraph{Quantization protocols.}
Three PTQ families across bit-widths 2--8, spanning basic rounding to calibration-based compensation: \textbf{GGUF} (round-to-nearest, \texttt{Q2\_K}--\texttt{Q8\_0}), \textbf{GPTQ}~\cite{frantar2022gptq} (4-bit, second-order reconstruction), and \textbf{BitsAndBytes}~\cite{dettmers2022gpt3} (INT8, NF4, FP4).

\paragraph{Fine-tuning tasks.}
LoRA fine-tunes on \textbf{TruthfulQA}~\cite{lin2022truthfulqa} (factual grounding) and \textbf{BBQ}~\cite{parrish2022bbq} (Bias Benchmark for QA, social-context reasoning)---two tasks with contrasting drift geometries (Sec.~\ref{subsec:decompose}). Downstream forgetting is measured on benchmarks disjoint from both fine-tuning tasks.

\paragraph{Benchmarks and scoring.}
Five benchmarks: \textbf{MMLU}~\cite{hendrycks2021measuring}, \textbf{ARC}~\cite{clark2018think} (multiple-choice knowledge), \textbf{TriviaQA}~\cite{joshi2017triviaqa}, \textbf{SQuAD}~\cite{rajpurkar2016squad} (short-horizon QA), and \textbf{GSM8K}~\cite{cobbe2021training} (multi-step reasoning). All risks are computed teacher-forced (prompt $c$ and targets $y$ scored in a single forward pass over the gold span), producing a deterministic per-sample CE loss whose expectation gives the model's risk $\mathcal{R}_M$, and $|\Delta\mathcal{R}|$ is the target-vs-proxy gap we report.

\paragraph{Calibration and hyperparameters.}
PRISM and $|\Delta\mathcal{R}|$ are evaluated on fixed held-out subsets shared across all variants of a base ($512$ samples per benchmark for PTQ, $256$ for LoRA forgetting); since observed $|\Delta\mathcal{R}|$ spans 1--2 orders of magnitude across variants, rankings are robust to subset choice. LoRA fine-tunes rank-$32$ attention adapters with frozen head (AdamW, lr $10^{-5}$, batch size $16$, bf16). For the regularization comparison (Sec.~\ref{subsec:action}), both penalties operate on $\mathcal{D}_{\mathrm{ref}}$ ($32$ pre-training sequences disjoint from $\mathcal{D}_{\mathrm{FT}}$): the trace-norm shape penalty sweeps $\lambda \in \{0.01, 0.05, 0.1, 0.5, 1.0\}$ and a replay-CE baseline sweeps $\lambda \in \{0.001, 0.005, 0.01, 0.05, 0.1\}$, each range matched to its penalty's natural scale ($1{-}\Omega \sim 0.1$ vs.\ CE reference loss $\sim 1$~nat). Checkpoints every $25$ steps, analysis at step $300$; all experiments use a single NVIDIA RTX 5090 (32~GB).

\subsection{Predictiveness: the bound tracks the risk gap}
\label{subsec:predict}
\label{subsec:quant_exp}
\label{subsec:forget_exp}

\begin{figure}[t]
    \vspace*{\figpullup}
    \centering
    \includegraphics[width=0.98\linewidth]{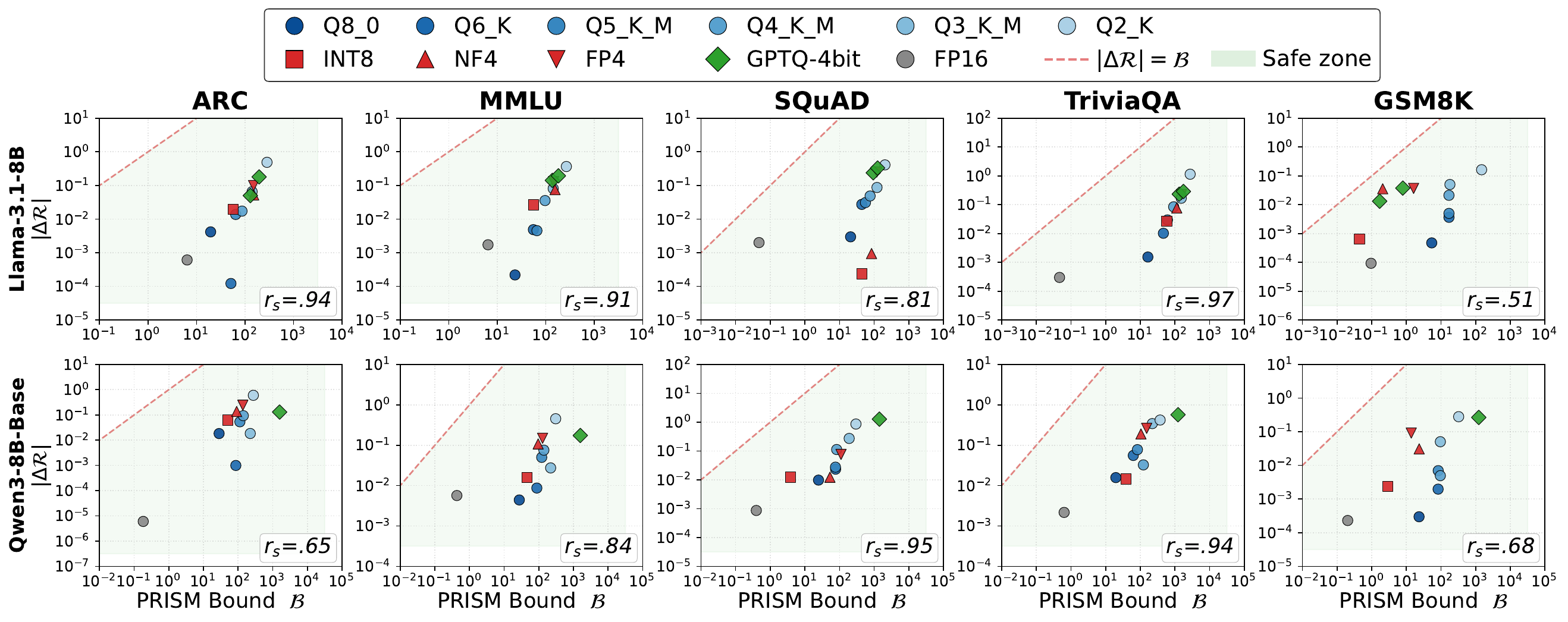}
    \caption{\textbf{The PRISM bound $\mathcal{B}$ tracks the empirical risk gap across two model families and five benchmarks.} Each subplot scatters the PRISM bound $\mathcal{B}$ (x-axis, log) against the empirical cross-entropy risk gap $|\Delta\mathcal{R}|$ (y-axis, log). Each point is one quantization variant; colors denote PTQ family (GGUF / GPTQ / BitsAndBytes). Rows: Llama-3.1-8B, Qwen3-8B. Columns: ARC, MMLU, SQuAD, TriviaQA, GSM8K. Per-subplot Spearman $r_s$ is annotated in each panel (mean $r_s{\approx}0.82$ across the 2{$\times$}5 grid; per-component breakdown in Sec.~\ref{subsec:ablation}); the red dashed $y{=}x$ line marks bound tightness, and the \textcolor{green!50!black}{green shaded region below it} is the safe zone where the bound provably holds. Replication on Ministral-3-8B and DeepSeek-R1-Distill-Llama-8B: Fig.~\ref{fig:quant_grid_bound_extra} (Appendix~\ref{app:quant_tables}).}
    \label{fig:quant_grid_bound}
\end{figure}

\begin{table}[t]
\centering
\caption{Geometric decomposition for \textbf{Llama-3.1-8B} under identity alignment ($W{=}I$) on MMLU. Each benchmark section reports Spearman's $r_s(\mathcal{B},\,|\Delta\mathcal{R}|)$ across all quantization variants. Shading: \colorbox{red!35}{$\Omega{<}0.80$} / \colorbox{red!18}{$\Omega{<}0.95$} on $(\Omega,\delta,\mathcal{B},|\Delta\mathcal{R}|)$; \colorbox{cyan!12}{$\gamma{=}0$} when the head is preserved ($H_T{=}H_P$).}
\label{tab:llama_decomposition_main_bound}
\resizebox{\textwidth}{!}{%
\begin{tabular}{ll l rrrrrrr}
\toprule
\multicolumn{1}{c}{\textbf{Dataset}} & \multicolumn{1}{c}{\textbf{Family}} & \multicolumn{1}{c}{\textbf{Method}} & \multicolumn{1}{c}{$\boldsymbol{\rho_T}$} & \multicolumn{1}{c}{$\boldsymbol{\rho_P}$} & \multicolumn{1}{c}{$\boldsymbol{\Omega}$} & \multicolumn{1}{c}{$\boldsymbol{\delta}$} & \multicolumn{1}{c}{$\boldsymbol{\gamma}$} & \multicolumn{1}{c}{\textbf{PRISM} $\boldsymbol{\mathcal{B}}$} & \multicolumn{1}{c}{$\boldsymbol{|\Delta\mathcal{R}|}$} \\
\midrule
MMLU & -- & FP16 & 138.96 & 138.96 & 0.9998 & 6.4546 & \cellcolor{cyan!12} $0$ & 6.4546 & 0.0017 \\
{\small ($\mathbf{r_s{=}0.91}$)} & GGUF & Q8\_0 & 138.96 & 139.18 & 0.9988 & 17.7313 & 5.5072 & 23.2386 & 0.0002 \\
 &  & Q6\_K & 138.96 & 139.83 & 0.9945 & 38.2729 & 17.3026 & 55.5755 & 0.0049 \\
 &  & Q5\_K\_M & 138.96 & 139.94 & 0.9912 & 48.5266 & 17.3177 & 65.8443 & 0.0045 \\
 &  & Q4\_K\_M & 138.96 & 140.36 & 0.9764 & 79.3320 & 17.4187 & 96.7506 & 0.0356 \\
 &  & Q3\_K\_M & 138.96 & 140.33 & \cellcolor{red!18} 0.9413 & \cellcolor{red!18} 125.0579 & 17.3380 & \cellcolor{red!18} 142.3959 & \cellcolor{red!18} 0.0808 \\
 &  & Q2\_K & 138.96 & 143.86 & \cellcolor{red!35} 0.7750 & \cellcolor{red!35} 248.2226 & 17.8641 & \cellcolor{red!35} 266.0867 & \cellcolor{red!35} 0.3658 \\
 & BnB & INT8 & 138.96 & 139.09 & 0.9880 & 56.3886 & \cellcolor{cyan!12} $0$ & 56.3886 & 0.0265 \\
 &  & NF4 & 138.96 & 144.16 & \cellcolor{red!18} 0.9124 & \cellcolor{red!18} 155.4506 & \cellcolor{cyan!12} $0$ & \cellcolor{red!18} 155.4506 & \cellcolor{red!18} 0.0750 \\
 &  & FP4 & 138.96 & 138.10 & \cellcolor{red!18} 0.9196 & \cellcolor{red!18} 145.1767 & \cellcolor{cyan!12} $0$ & \cellcolor{red!18} 145.1767 & \cellcolor{red!18} 0.1306 \\
 & GPTQ & GPTQ-4bit & 138.96 & 140.37 & \cellcolor{red!18} 0.9298 & \cellcolor{red!18} 136.7867 & \cellcolor{cyan!12} $0$ & \cellcolor{red!18} 136.7867 & \cellcolor{red!18} 0.1422 \\
\bottomrule
\end{tabular}}
\end{table}

\paragraph{Quantization.}
The PRISM bound $\mathcal{B}$ tracks $|\Delta\mathcal{R}|$ with strong rank correlation across Llama and Qwen (mean Spearman $r_s = 0.820 \pm 0.0471$ (SEM) over the $2{\times}5$ grid; Fig.~\ref{fig:quant_grid_bound}; Ministral/DeepSeek replication in Appendix Fig.~\ref{fig:quant_grid_bound_extra}). Bit-width drives the bound monotonically: Q8/Q6 GGUF variants sit in the low-$\mathcal{B}$/low-$|\Delta\mathcal{R}|$ corner, while Q2 configurations move into the upper-right region where shape distortion $(1{-}\Omega)$ dominates the feature error (Table~\ref{tab:llama_decomposition_main_bound}). All evaluated variants lie below the $y{=}x$ line (Fig.~\ref{fig:quant_grid_bound}), confirming Theorem~\ref{thm:unified_bound}'s upper-bound guarantee empirically; the bit-width tier ordering (Q8/Q6 $<$ Q5/Q4 $<$ Q3/Q2 in $\mathcal{B}$) holds across both families.

On Llama MMLU specifically, this correlation reaches Spearman $r_s = 0.91$: as bit-width drops from Q8 to Q2, $\mathcal{B}$ and $|\Delta\mathcal{R}|$ rise together (Table~\ref{tab:llama_decomposition_main_bound}); the remaining Llama benchmarks (TriviaQA, ARC, SQuAD, GSM8K), the Qwen3-8B counterpart, the feature-only ($\delta$) scatter, and Ministral/DeepSeek decompositions are in Appendices~\ref{app:feature_only}--\ref{app:per_model_tables}.

\begin{figure}[t]
    \vspace*{\figpullup}
    \centering
    \includegraphics[width=1.0\linewidth]{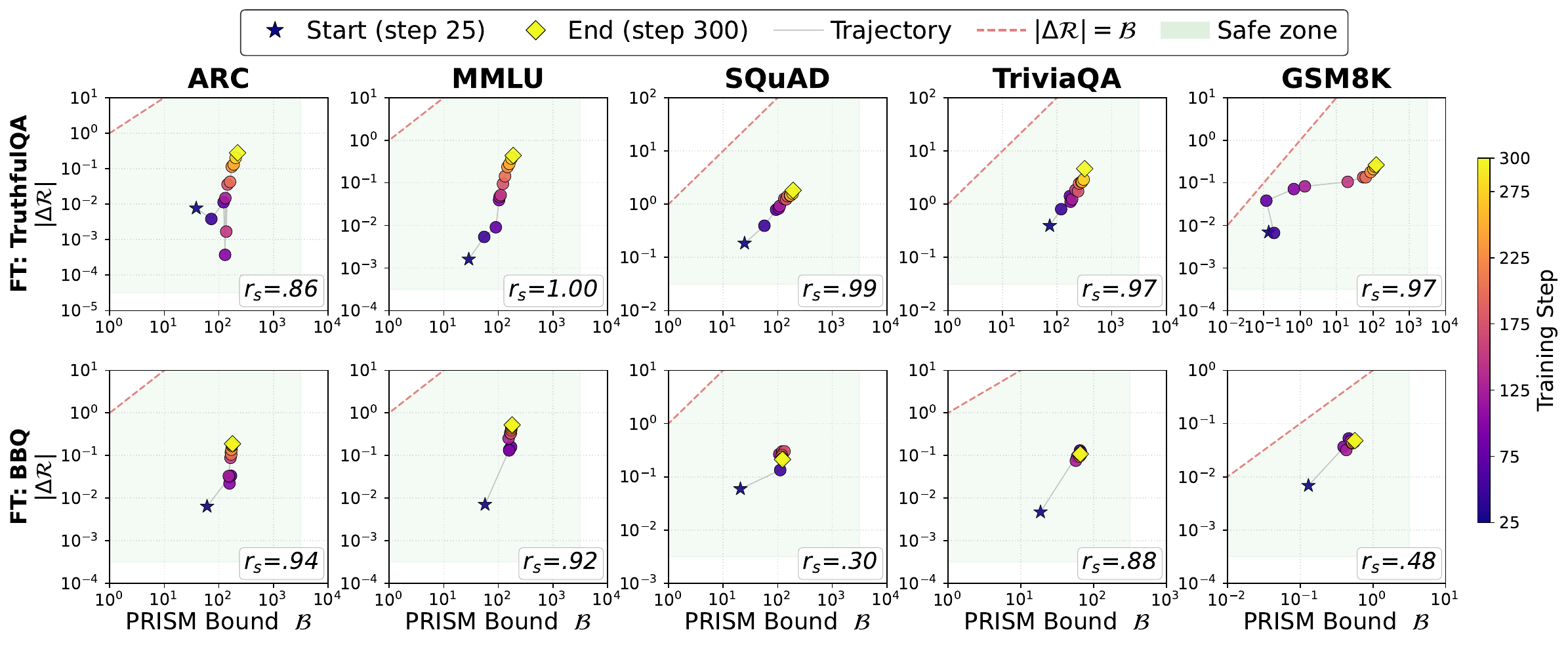}
    \caption{\textbf{Llama-3.1-8B: the PRISM bound tracks catastrophic forgetting across LoRA fine-tuning steps.} Each subplot scatters the bound $\mathcal{B}$ (x-axis, log) against the empirical forgetting $|\Delta\mathcal{R}|$ (y-axis, log) on a downstream benchmark, with one point per LoRA checkpoint colored by training step. Rows: fine-tuning task (TruthfulQA, BBQ). Columns: downstream benchmark (ARC, MMLU, SQuAD, TriviaQA, GSM8K). Under LoRA's frozen \texttt{lm\_head}, $\gamma{=}0$ so $\mathcal{B}$ reduces to backbone scale ($\Delta\rho$) and shape ($1{-}\Omega$) drift. Spearman $r_s$ per subplot is annotated in each panel. Qwen3-8B replication: Appendix~\ref{app:qwen_forgetting}.}
    \label{fig:forget_grid_llama}
\end{figure}

\paragraph{LoRA forgetting.}
The same predictiveness extends to fine-tuning-induced drift (Fig.~\ref{fig:forget_grid_llama}): as LoRA fine-tuning proceeds on TruthfulQA or BBQ from step 25 to step 300, the bound computed on \emph{downstream} benchmarks tracks the empirical cross-entropy drift step-by-step, with mean Spearman $r_s=0.831 \pm 0.0722$ over the $2 \times 5$ downstream cells, comparable to the PTQ grid (Fig.~\ref{fig:quant_grid_bound}). On Llama TruthfulQA in particular, forgetting accumulates progressively across checkpoints, where $\mathcal{B}$ and $|\Delta\mathcal{R}|$ trend together (mean $r_s = 0.958$ across the 5 downstream benchmarks). The Qwen3-8B replication (Appendix~\ref{app:qwen_forgetting}) reproduces both patterns in most cells.

\subsection{Decomposability: three axes, three failure modes}
\label{subsec:decompose}

Scalar correlation alone matches prior representational similarity work in answering \emph{whether} a variant has degraded. PRISM's decomposition adds \emph{which}: the dominant axis of Theorem~\ref{thm:unified_bound} reads off directly from the per-variant numbers and maps to a distinct empirical failure mode.

\paragraph{Shape distortion (low-bit PTQ).}
At Q2 and Q3 across all four families tested (Appendix~\ref{app:per_model_tables} for Ministral and DeepSeek), the shape term $2\rho_T\rho_P(1{-}\Omega)$ typically dominates the scale term $(\Delta\rho)^2$, often by orders of magnitude. At Llama-Q2\_K MMLU (Table~\ref{tab:llama_decomposition_main_bound}), $\rho_P$ exceeds $\rho_T$ only modestly ($\Delta\rho \approx 4.9$, scale $\approx 24$) yet $\Omega$ drops to $0.78$, driving shape ($\approx 9{,}000$) to dominate; Ministral-Q2\_K SQuAD shows the same pattern ($\sim 3$ vs $\sim 9{,}000$). This is consistent with low-bit PTQ corrupting the relational structure of the feature manifold rather than its global scale---a separation invisible to any scalar similarity metric. The same shape dominance appears in LoRA forgetting: at Llama TruthfulQA-FT on TriviaQA (no-reg baseline; Table~\ref{tab:reg_compare_llama_truthfulqa}, Appendix), $\Omega$ drops to $0.76$ with shape ($\approx 10{,}000$) outweighing scale ($\approx 35$) by ${\sim}280\times$---motivating the shape regularizer of Sec.~\ref{subsec:action}.

\paragraph{Scale-axis separability.}
The scale axis surfaces as a non-redundant channel in two regimes. In PTQ, Qwen3-Base Q2\_K on GSM8K is a scale-axis outlier: $\rho_P$ jumps from $267$ to $313$ ($|\Delta\rho|{=}46$, vs.\ $\le 8$ for all other Qwen3 quantization variants on this benchmark), separately from the shape drift (Table~\ref{tab:qwen_decomposition_all_bound}, Appendix). In LoRA forgetting, the sign of $\Delta\rho$ varies with source task---TruthfulQA-FT drives $\rho_P > \rho_T$ on all five benchmarks, while BBQ-FT produces $\rho_P < \rho_T$ on ARC/MMLU (Tables~\ref{tab:reg_compare_llama_truthfulqa},~\ref{tab:reg_compare_llama_bbq}, Appendix). The $(\Delta\rho, 1{-}\Omega)$ decomposition makes these \emph{qualitatively different drift geometries} visible---invisible to scalar similarity.

\paragraph{Head divergence (GGUF k-quant tiers that quantize \texttt{lm\_head}).}
When the protocol quantizes the output embedding, the head term $\gamma$ becomes the dominant arm. At Qwen3-Base Q6\_K on SQuAD (Table~\ref{tab:qwen_decomposition_all_bound}, Appendix~\ref{app:per_model_tables}), backbone scale and shape are essentially perfect ($(\Delta\rho)^2{\approx}0.12$, $\Omega \approx 1$), so $\delta{=}1.18$, yet the quantized output embedding alone contributes $\gamma{=}75.77$---making $\gamma$ essentially the entire bound (Q8\_0 shows the same pattern: $\delta{=}0.74$, $\gamma{=}23.96$). Under BnB INT8 the same Qwen3-Base keeps $\gamma{\equiv}0$ by construction, leaving only $\delta{=}3.81$ as the bound---a $20\times$ reduction from Q6\_K's $\mathcal{B}{=}76.95$, determined entirely by which protocol quantizes \texttt{lm\_head}. The decomposition makes this protocol-level switch read off directly, identifying not just the magnitude of degradation but its dominant channel.

\paragraph{From diagnosis to remediation.}
Each dominant axis suggests an axis-specific remediation. Scale collapse admits per-channel outlier smoothing~\cite{xiao2023smoothquant}, and head divergence admits FP16-\texttt{lm\_head} retention. Shape preservation admits two lifecycle-specific instances: Hessian-aware reconstruction at PTQ time, and differentiable trace regularization at LoRA training time (Sec.~\ref{subsec:action}). The bound thus turns from a scalar score into an axis-level diagnostic with actionable structure.

\subsection{Actionability: from diagnostic to training objective}
\label{subsec:action}
\label{subsec:shape_reg_exp}

\begin{figure}[t]
    \vspace*{\figpullup}
    \centering
    \includegraphics[width=1.0\linewidth]{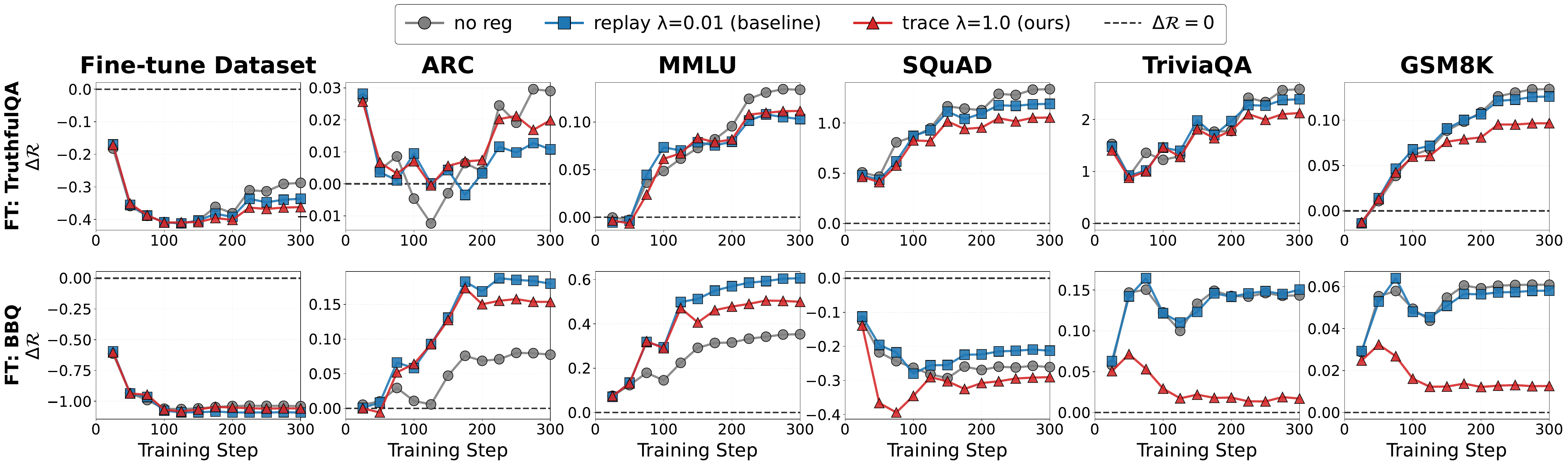}
    \caption{\textbf{Shape regularization vs.\ replay-CE on Llama-3.1-8B.} LoRA fine-tuning on TruthfulQA (top) and BBQ (bottom) under three configurations: \emph{no reg} (anchor), the \emph{replay} baseline, and \emph{our trace} (Eq.~\ref{eq:shape_reg}); the latter two share a $32$-sample reference set and are each method's sweep $|\Delta\mathcal{R}|$-best. Our trace cuts downstream mean $|\Delta\mathcal{R}|$ further than the replay baseline; per-benchmark $\Omega$ and $|\Delta\mathcal{R}|$ in Table~\ref{tab:reg_compact_llama_truthfulqa}. Qwen3-8B replication: Appendix~\ref{app:qwen_forgetting}.}
    \label{fig:shape_reg_combined_llama}
\end{figure}

\begin{table}[t]
\centering
\caption{Regularization comparison for \textbf{Llama-3.1-8B} fine-tuned on \textbf{TruthfulQA} (identity alignment $W{=}I$; metrics at step 300). Rows: $\Omega$ (higher = more shape preserved) and $|\Delta\mathcal{R}|$ (lower = less forgetting). \textbf{Bold} / \underline{underline}: 1st / 2nd-best per benchmark group. Across-benchmark unweighted means for [no reg / replay $\lambda{=}0.01$ (baseline) / trace $\lambda{=}1.0$ (ours)]: $\Omega$ = 0.906 / 0.915 / \textbf{0.931}; $|\Delta\mathcal{R}|$ = 0.843 / 0.764 / \textbf{0.681}. In this frozen-\texttt{lm\_head} LoRA setting $\gamma \equiv 0$ and $\rho_T \approx \rho_P$, so $\delta$ and $\mathcal{B}$ track $\Omega$; the full decomposition is in Appendix Tables~\ref{tab:reg_compare_llama_truthfulqa}--\ref{tab:reg_compare_qwen_bbq}.}
\label{tab:reg_compact_llama_truthfulqa}
\setlength{\tabcolsep}{4pt}
\resizebox{\textwidth}{!}{%
\begin{tabular}{l ccccccccccccccc}
\toprule
 & \multicolumn{3}{c}{\textbf{ARC}} & \multicolumn{3}{c}{\textbf{MMLU}} & \multicolumn{3}{c}{\textbf{SQuAD}} & \multicolumn{3}{c}{\textbf{TriviaQA}} & \multicolumn{3}{c}{\textbf{GSM8K}} \\
\cmidrule(lr){2-4} \cmidrule(lr){5-7} \cmidrule(lr){8-10} \cmidrule(lr){11-13} \cmidrule(lr){14-16}
 & \textbf{no reg} & \textbf{replay} & \textbf{trace} & \textbf{no reg} & \textbf{replay} & \textbf{trace} & \textbf{no reg} & \textbf{replay} & \textbf{trace} & \textbf{no reg} & \textbf{replay} & \textbf{trace} & \textbf{no reg} & \textbf{replay} & \textbf{trace} \\
\midrule
$\Omega \uparrow$ & 0.916 & \underline{0.918} & \textbf{0.932} & 0.943 & \underline{0.945} & \textbf{0.954} & 0.923 & \underline{0.935} & \textbf{0.948} & 0.759 & \underline{0.781} & \textbf{0.821} & 0.991 & \underline{0.996} & \textbf{1.000} \\
\midrule
$|\Delta\mathcal{R}|\downarrow$ & 0.029 & \textbf{0.011} & \underline{0.020} & 0.134 & \textbf{0.103} & \underline{0.112} & 1.337 & \underline{1.191} & \textbf{1.054} & 2.583 & \underline{2.388} & \textbf{2.124} & 0.134 & \underline{0.126} & \textbf{0.097} \\
\bottomrule
\end{tabular}}
\end{table}

The decomposition of Sec.~\ref{subsec:decompose} suggests an immediate intervention: shape drift dominates LoRA forgetting, and $1{-}\Omega$ is differentiable in $Z_t$. We add the trace-norm penalty $\lambda(1{-}\Omega(Z_0^{\mathrm{ref}}, Z_t^{\mathrm{ref}}))$ on the reference set $\mathcal{D}_{\mathrm{ref}}$. To isolate shape preservation from data re-fitting, we compare against a replay-CE baseline that uses the same $\mathcal{D}_{\mathrm{ref}}$ (cross-entropy on $\mathcal{D}_{\mathrm{ref}}$, swept over $\lambda$).

Fig.~\ref{fig:shape_reg_combined_llama} and Table~\ref{tab:reg_compact_llama_truthfulqa} compare our trace at $\lambda{=}1.0$ against the replay baseline at $\lambda{=}0.01$ on Llama-3.1-8B (no regularization as anchor reference; both $\lambda$ chosen as each method's sweep optimum under the identical evaluation protocol). Our trace cuts mean downstream $|\Delta\mathcal{R}|$ on TruthfulQA from $0.84$ (no reg) to $0.68$ ($-19\%$); the replay baseline only reaches $0.76$ ($-9\%$). On Llama BBQ, fine-tuning is less shape-disruptive (mean baseline $\Omega = 0.932$ vs.\ TruthfulQA's $0.906$); trace still lifts $\Omega$ ($0.93 \to 0.98$) while replay leaves it flat. Mechanistically, replay reduces forgetting indirectly by re-fitting reference data (without directly targeting $\Omega$); trace contracts the shape arm of the bound at the source. Task-dependence and the axis-guided gating signal are analyzed in Appendix~\ref{app:reg_task_dependence} (Table~\ref{tab:reg_gating}). The Qwen3-8B replication and BBQ per-variant tables are in Appendix~\ref{app:qwen_forgetting}.

\subsection{Ablation: component-wise contributions}
\label{subsec:ablation}

\begin{table}[t]
\centering
\caption{\textbf{Component-wise ablation: PRISM ranks variants strongly under both alignments.} Rows add bound components cumulatively (shape $\to +$ scale $\to +$ head); larger $r_s$ = better ranking; \textbf{bold} / \underline{underline}: 1st / 2nd-best within each block. Top block: identity $W{=}I$ (main text default; $\mathcal{B}$ reaches $r_s{=}0.82$). Bottom block: Procrustes-optimal $W{=}W_N$ ($\mathcal{B}_N$ reaches $r_s{=}0.91$, modestly tighter since $W_N$ minimizes the alignment residual). The main text adopts $W{=}I$ for SVD-free differentiability and the $H_T{=}H_P$ head-term simplification in frozen-\texttt{lm\_head} regimes.}
\label{tab:baseline_combined}
\small
\setlength{\tabcolsep}{4pt}
\scalebox{0.95}{\begin{tabular}{l cc c}
\toprule
\multicolumn{1}{c}{\textbf{Metric}} & \multicolumn{1}{c}{\textbf{Llama}} & \multicolumn{1}{c}{\textbf{Qwen3}} & \multicolumn{1}{c}{\textbf{Mean} $\boldsymbol{r_s}$} \\
 & \scriptsize (5 benchmarks) & \scriptsize (5 benchmarks) & \scriptsize (10 cells) \\
\midrule
\makebox[2.1em][l]{$\Omega$}{\scriptsize (shape similarity; baseline)} & 0.825 & 0.783 & 0.804 \\
\makebox[2.1em][l]{$\delta$}{\scriptsize (+ scale; feature alignment error)} & \textbf{0.881} & \textbf{0.855} & \textbf{0.868} \\
\makebox[2.1em][l]{$\mathcal{B}$}{\scriptsize (+ head; full bound, $W{=}I$)} & \underline{0.828} & \underline{0.813} & \underline{0.820} \\
\midrule
\makebox[2.1em][l]{$\Omega_N$}{\scriptsize (shape similarity; baseline)} & 0.839 & 0.773 & 0.806 \\
\makebox[2.1em][l]{$\delta_N$}{\scriptsize (+ scale; feature alignment error)} & \underline{0.898} & \underline{0.847} & \underline{0.873} \\
\makebox[2.1em][l]{$\mathcal{B}_N$}{\scriptsize (+ head; full bound, $W{=}W_N$)} & \textbf{0.927} & \textbf{0.896} & \textbf{0.912} \\
\bottomrule
\end{tabular}}
\end{table}

Table~\ref{tab:baseline_combined} measures the marginal ranking contribution of each PRISM bound component. Under $W{=}I$ (top block), the shape-only similarity $\Omega$ already achieves mean $r_s{=}0.804$: token-feature relational geometry alone carries most of the variant-ranking signal. Adding scale to form $\delta$ lifts the mean to $0.868$ ($+0.064$)---scale collapse is the single largest per-component contributor. Adding the head term to form $\mathcal{B}$ moves the aggregate to $0.820$: $\gamma{>}0$ engages only on GGUF k-quant tiers that quantize \texttt{lm\_head} while $\gamma{\equiv}0$ on GPTQ/BnB (FP16 head), so pooling across these heterogeneous $\gamma$-regimes adds variance to the pooled rank order while preserving the bound's validity. $\mathcal{B}$ remains our default---it is Theorem~\ref{thm:unified_bound}'s certified upper bound and the only valid metric whenever $\gamma{>}0$.

Under $W{=}W_N$ (bottom block), $\Omega_N \to \delta_N \to \mathcal{B}_N$ all rise monotonically and $\mathcal{B}_N$ achieves $r_s{=}0.912$, confirming the head term contributes most when the alignment absorbs the $H_T \to H_P$ rotation. The $W{=}I$ trace specialization trades $\sim$$0.09$ Spearman for SVD-free differentiability, frozen-\texttt{lm\_head} head-term simplification, and consistency with the Sec.~\ref{subsec:shape_reg} regularizer. Both alignments thus yield strong predictiveness; the choice between them is primarily design-driven.

\section{Discussion}
\label{sec:conclusion}

PRISM combines a closed-form geometric bound on $|\mathcal{R}_T{-}\mathcal{R}_P|$ with an exact three-axis decomposition (scale, shape, head). The decomposition turns variant comparison from a scalar similarity score into an axis-level diagnostic that pinpoints \emph{which} part has drifted; the bound applies to PTQ and frozen-head LoRA, and the differentiable $\Omega$ doubles as a training-time regularizer. Empirically, this single framework ranks variants with comparable Spearman across both regimes ($r_s{\approx}0.82$ on PTQ, $0.831$ on LoRA), the three axes localize qualitatively distinct failure modes, and the axis-guided shape regularizer outperforms experience replay in aggregate at mitigating downstream forgetting.

\paragraph{Scope and limitations.}
PRISM bounds the cross-entropy risk gap to a reference base; it is calibrated for variant ranking, where Spearman correlations are strong across both PTQ and LoRA settings (Sec.~\ref{subsec:predict}). Tight absolute estimation of $|\Delta\mathcal{R}|$ is a complementary problem we leave to future work. Our evaluation uses teacher-forced features for efficiency (a single deterministic forward pass per variant, without sampling noise or decoding strategy to control for); the bound applies unchanged to any $(Z_T, Z_P)$ matrices, including features collected along free-running generation trajectories---a direct extension we leave to future work. The shape regularizer of Sec.~\ref{subsec:action} handles the shape axis directly via training; per-axis protocol-level mitigations (Sec.~\ref{subsec:decompose}) remain a research follow-up enabled by the diagnostic.

\paragraph{Future work.} 
We describe future directions in detail in Appendix~\ref{app:future_work}, including \textbf{beyond LoRA forgetting} (analyze and regularize SFT/distillation drift), \textbf{diagnostic applications} (OOD detection, hyperparameter transfer, drift monitoring), and \textbf{beyond LLMs} (ViT, contrastive image encoders).

\section*{Acknowledgments}
We thank Chih-Han Yu for inspiring this work, and the Appier AI Research team members for their support.
This work was supported in part by the National Science and Technology Council, Taiwan, under Grants 114-2628-E-002-021- and 115-2634-F-002 -012-, and the Taiwan Centers of Excellence. 
Shao-Hua Sun was supported by the Yushan Fellow Program of the Ministry of Education, Taiwan.

\renewcommand{\bibname}{References}
\bibliographystyle{unsrtnat}
\bibliography{paper} 

\clearpage
\appendix
\section*{Appendix}

\vspace{-1.8cm}
\begingroup
\hypersetup{colorlinks=false, linkcolor=black}
\hypersetup{pdfborder={0 0 0}}
\part{} 
\parttoc 
\endgroup



\section{Proof of the unified risk bound (Theorem~\ref{thm:unified_bound})}
\label{app:detailed_proofs}

We provide the complete proof of Theorem~\ref{thm:unified_bound}, which the main text states for general $W \in \mathcal{O}(d)$ (trace family $\Omega_W$, Eq.~\ref{eq:omega_def}). The proof proceeds through five self-contained steps: risk decomposition via triangle inequality, Lipschitz analysis of cross-entropy with respect to features, the alignment-residual decomposition (whose Procrustes-optimal specialization $W = W_N$ yields the nuclear form $\Omega_N$ via SVD), covariance-adjusted head bound, and final assembly. The identity-alignment specialization $W = I$ (used throughout Sec.~\ref{sec:experiments} and the regularizer of Sec.~\ref{subsec:shape_reg}) and the $W = W_N$ specialization (used in the Sec.~\ref{subsec:ablation} ablation) follow as corollaries of the same proof.

\subsection{Relation to classical Procrustes shape metrics}
\label{app:shape_metrics_relation}

The left-hand side of Eq.~(\ref{eq:exact_equality}) at $W = W_N$ is the squared Procrustes size-and-shape distance $d_1^2(Z_T, Z_P)$ studied in the generalized shape metrics framework~\cite{williams2021generalized} and its recent decodability analysis~\cite{harvey2024what}. Expanding $d_1^2$ reveals that the nuclear norm $\|Z_T^\top Z_P\|_*$ appears inside any Procrustes-based similarity; accordingly, our contribution is \emph{not} the Procrustes distance itself. Rather, it is (a)~the explicit split of this residual into the scale term $(\rho_T-\rho_P)^2$ and the shape term $2\rho_T\rho_P(1-\Omega_N)$, which the shape-metric literature treats jointly as a single descriptive dissimilarity, and (b)~its lifting to a functional-risk bound (Theorem~\ref{thm:unified_bound}). Section~\ref{sec:applications} (Quantization and LoRA Forgetting paragraphs) shows that the two arms of this split have qualitatively different practical meanings and dominate the risk gap under different LLM lifecycle settings.

\subsection{Step 1: risk decomposition via triangle inequality}
\label{app:risk_decomp}

\paragraph{Setup.}
Recall that the risk of model $M$ is $\mathcal{R}_M = \mathbb{E}_{(x,y)}[\ell(\phi_M(x) \cdot H_M, y)]$, where $\ell$ is the cross-entropy loss (Eq.~\ref{eq:ce_def}). For any orthogonal alignment $W \in \mathcal{O}(d)$, introduce the hybrid risk:
\begin{equation}
\mathcal{R}_{P \to T} := \mathbb{E}_{(x,y)}\!\Big[\ell\!\big(\phi_P(x)\, W \cdot H_T,\; y\big)\Big].
\end{equation}
This evaluates the target's head $H_T$ on proxy features aligned by $W$, serving as a bridge between $\mathcal{R}_T$ and $\mathcal{R}_P$. The proof below holds for any $W \in \mathcal{O}(d)$; the two specializations relevant to the paper are $W = I$ (identity alignment, trace form $\Omega$, used throughout the main text) and $W = W_N := \arg\min_W \|Z_T - Z_P W\|_F^2$ (Procrustes-optimal, nuclear form $\Omega_N$, Sec.~\ref{subsec:ablation} ablation).

\paragraph{Decomposition.}
By the triangle inequality on absolute values:
\begin{equation}
\label{eq:app_triangle}
|\mathcal{R}_T - \mathcal{R}_P| \;\le\; \underbrace{\big|\mathcal{R}_T - \mathcal{R}_{P \to T}\big|}_{\text{Feature Alignment Error } \delta} \;+\; \underbrace{\big|\mathcal{R}_{P \to T} - \mathcal{R}_P\big|}_{\text{Head Discrepancy } \gamma}.
\end{equation}
$\delta$ measures how features differ through the same head $H_T$; $\gamma$ measures how heads differ on the same aligned features.

\subsection{Step 2: bounding the feature alignment error $\delta$}
\label{app:kfeat}

\paragraph{Lipschitz property of cross-entropy with respect to features.}
We bound $\delta$ by showing that the cross-entropy loss is Lipschitz continuous in the feature vector $z$. Define the end-to-end composition $g_T(z, y) := \ell(z \cdot H_T, y)$, treating the classification head $H_T$ and the cross-entropy loss as a single map from features to scalar loss. We bound $g_T$ via direct gradient analysis rather than the Lipschitz composition rule (loss-Lipschitz $\sqrt{2}$ times head operator norm $\|H_T\|_2$): the composition rule is agnostic to the softmax output and gives the spectral constant $\sqrt{2}\|H_T\|_2$ (Remark below), whereas the gradient $\nabla_z g_T = H_T(\hat{p} - e_y)$ exposes the predicted distribution $\hat{p}$ and lets us derive a constant set by pairwise token-embedding distances instead. Our goal is to find the tightest constant $K_{\mathrm{feat}}$ such that:
\begin{equation}
|g_T(z_1, y) - g_T(z_2, y)| \le K_{\mathrm{feat}} \|z_1 - z_2\|_2, \quad \forall z_1, z_2, y.
\end{equation}

\paragraph{Computing the gradient.}
The logit vector is $v = z \cdot H_T \in \mathbb{R}^V$, and the cross-entropy loss is $\ell(v, y) = -v_y + \log \sum_{j=1}^{V} e^{v_j}$. The gradient with respect to the logits is:
\begin{equation}
\frac{\partial \ell}{\partial v} = \hat{p} - e_y,
\end{equation}
where $\hat{p} = \mathrm{softmax}(v) \in \mathbb{R}^V$ and $e_y$ is the one-hot vector at index $y$. Applying the chain rule to obtain the gradient with respect to features:
\begin{equation}
\nabla_z \ell = H_T (\hat{p} - e_y) = \sum_{j=1}^{V} (\hat{p}_j - \delta_{jy})\, h_{T,j},
\end{equation}
where $h_{T,j}$ denotes the $j$-th column of $H_T$ (the embedding vector for token $j$).

\paragraph{Simplex polarization.}
We now derive a tight bound on $\|\nabla_z \ell\|_2$ by exploiting the probability simplex constraint $\sum_j \hat{p}_j = 1$. Expanding the gradient for true class $y$:
\begin{equation}
\nabla_z \ell = (\hat{p}_y - 1)\, h_{T,y} + \sum_{j \neq y} \hat{p}_j\, h_{T,j}.
\end{equation}
Since $1 - \hat{p}_y = \sum_{j \neq y} \hat{p}_j$, we can rewrite:
\begin{equation}
\nabla_z \ell = -\Big(\sum_{j \neq y} \hat{p}_j\Big) h_{T,y} + \sum_{j \neq y} \hat{p}_j\, h_{T,j} = \sum_{j \neq y} \hat{p}_j\, (h_{T,j} - h_{T,y}).
\end{equation}
This reveals that the gradient is a \emph{non-negative linear combination} of pairwise differences between incorrect-class embeddings and the correct-class embedding, with weights $\hat{p}_j$ summing to $1 - \hat{p}_y \le 1$. Taking the norm:
\begin{equation}
\|\nabla_z \ell\|_2 \le \sum_{j \neq y} \hat{p}_j \|h_{T,j} - h_{T,y}\|_2 \le \max_{j \neq y} \|h_{T,j} - h_{T,y}\|_2 \cdot \underbrace{\sum_{j \neq y} \hat{p}_j}_{\le\, 1}.
\end{equation}
The bound depends on the true class $y$ only through $\max_{j \neq y} \|h_{T,j} - h_{T,y}\|_2$. Taking the supremum over $y \in \{1,\ldots,V\}$ extends the maximum to all ordered pairs $(j,y)$ with $j\neq y$ (and the $j=y$ case contributes $0$), giving the full pairwise diameter:
\begin{equation}
\label{eq:kfeat_tight}
K_{\mathrm{feat}} = \max_{j,k} \|h_{T,j} - h_{T,k}\|_2.
\end{equation}

\paragraph{Remark.} This bound depends on the \emph{relative distances} between token embeddings, not their absolute magnitudes. A uniform shift $H_T \to H_T + c\mathbf{1}^\top$ does not change $K_{\mathrm{feat}}$. A naive Cauchy--Schwarz bound gives $K_{\mathrm{feat}}^{\mathrm{naive}} = \sqrt{2}\|H_T\|_2$, which is substantially looser. Since the decomposition of Eq.~(\ref{eq:app_triangle}) routes through $\mathcal{R}_{P\to T}$ (which uses $H_T$), $K_{\mathrm{feat}}$ is determined by $H_T$ alone; it takes a single value across all proxy variants of a fixed target model and enters as a constant scaling on the feature term, not as a proxy-dependent quantity that could perturb variant ranking.

\paragraph{From gradient bound to Lipschitz inequality.}
Since $g_T(\cdot, y)$ is continuously differentiable, the gradient bound $\|\nabla_z g_T(z, y)\|_2 \le K_{\mathrm{feat}}$ implies the Lipschitz inequality stated above by the mean value theorem: for any $z_1, z_2$,
\begin{equation*}
g_T(z_2, y) - g_T(z_1, y) = \int_0^1 \nabla_z g_T\!\big(z_1 + t(z_2 - z_1),\, y\big)^\top (z_2 - z_1)\, dt,
\end{equation*}
and Cauchy--Schwarz gives $|g_T(z_2, y) - g_T(z_1, y)| \le K_{\mathrm{feat}}\, \|z_2 - z_1\|_2$.

\paragraph{Applying the Lipschitz bound.}
Using the Lipschitz property with $z_1 = \phi_T(x)$ and $z_2 = \phi_P(x) W$:
\begin{equation}
\begin{aligned}
\delta &= \big|\mathcal{R}_T - \mathcal{R}_{P \to T}\big| = \Big|\mathbb{E}\big[g_T(\phi_T(x), y) - g_T(\phi_P(x) W, y)\big]\Big| \\
&\le \mathbb{E}\Big[|g_T(\phi_T(x), y) - g_T(\phi_P(x) W, y)|\Big] \\
&\le K_{\mathrm{feat}} \cdot \mathbb{E}_x\!\big[\|\phi_T(x) - \phi_P(x) W\|_2\big].
\end{aligned}
\end{equation}

Applying Jensen's inequality ($\mathbb{E}[\|X\|] \le \sqrt{\mathbb{E}[\|X\|^2]}$) to connect this to the alignment residual:
\begin{equation}
\label{eq:delta_lipschitz}
\delta \le K_{\mathrm{feat}} \cdot \sqrt{\mathbb{E}_x\!\big[\|\phi_T(x) - \phi_P(x) W\|_2^2\big]} = K_{\mathrm{feat}} \cdot \sqrt{\frac{1}{n}\|Z_T - Z_P W\|_F^2}.
\end{equation}

\paragraph{Empirical values.} Table~\ref{tab:lipschitz_constants} reports $K_{\mathrm{feat}}$ for all 8B models studied here, ranging from $0.93$ (Mistral family) to $3.46$ (Qwen3-Base); $K_{\mathrm{pred}}=\sqrt{2}$ holds universally. These values are bounded by pairwise token-embedding distances and remain independent of $V$ (Llama $V{=}128{,}256$, Qwen3 $V{=}151{,}936$), in contrast to the naive bound $\sqrt{2}\|H_T\|_2$ whose spectral norm grows with vocabulary size.

\begin{table}[t]
\centering
\caption{Empirical Lipschitz constants of the PRISM bound (Theorem~\ref{thm:unified_bound}) for each evaluated model. $K_{\mathrm{pred}} = \sqrt{2}$ holds universally (Appendix~\ref{app:head_bound}, simplex polarization), independent of model. $K_{\mathrm{feat}}$ is empirical per model, depending on the spread of token embeddings in $H_T$; for the 8B-scale models studied here, $K_{\mathrm{feat}}$ ranges from $\approx 0.93$ (Mistral family) to $\approx 3.46$ (Qwen3-Base). Since $K_{\mathrm{feat}}$ is a constant per model, it scales the bound's magnitude but does not affect the within-model rank correlation that PRISM is calibrated to.}
\label{tab:lipschitz_constants}
\setlength{\tabcolsep}{8pt}
\begin{tabular}{l cc}
\toprule
\multicolumn{1}{c}{\textbf{Model}} & \multicolumn{1}{c}{$\boldsymbol{K_{\mathrm{feat}}}$} & \multicolumn{1}{c}{$\boldsymbol{K_{\mathrm{pred}}}$} \\
\midrule
Llama-3.1-8B & 2.61 & $\sqrt{2}$ \\
Ministral-3-8B & 0.98 & $\sqrt{2}$ \\
Qwen3-8B & 3.46 & $\sqrt{2}$ \\
DeepSeek-R1-8B & 2.60 & $\sqrt{2}$ \\
Llama-3.1-8B-Instruct & 2.60 & $\sqrt{2}$ \\
Ministral-3-8B-Instruct & 0.93 & $\sqrt{2}$ \\
Qwen3-8B-Instruct & 3.41 & $\sqrt{2}$ \\
\bottomrule
\end{tabular}
\end{table}

\subsection{Step 3: exact geometric decomposition}
\label{app:procrustes}

We derive the exact identity linking the alignment residual to scale and shape mismatches. The identity holds for every $W \in \mathcal{O}(d)$; we then specialize to the two cases relevant to this paper.

\paragraph{General identity.}
For any $W \in \mathcal{O}(d)$, expand the squared Frobenius norm:
\begin{equation}
\begin{aligned}
\|Z_T - Z_P W\|_F^2 &= \operatorname{Tr}\!\big[(Z_T - Z_P W)^\top (Z_T - Z_P W)\big] \\
&= \|Z_T\|_F^2 + \|Z_P W\|_F^2 - 2\operatorname{Tr}(Z_T^\top Z_P W).
\end{aligned}
\end{equation}
Since $W$ is orthogonal, $\|Z_P W\|_F^2 = \operatorname{Tr}(W^\top Z_P^\top Z_P W) = \operatorname{Tr}(Z_P^\top Z_P) = \|Z_P\|_F^2$ (cyclic trace + $W^\top W = I_d$). Therefore:
\begin{equation}
\label{eq:procrustes_expand}
\|Z_T - Z_P W\|_F^2 = \|Z_T\|_F^2 + \|Z_P\|_F^2 - 2\operatorname{Tr}(Z_T^\top Z_P W).
\end{equation}
Recall $\rho_M = \|Z_M\|_F/\sqrt{n}$ (so $\|Z_M\|_F^2/n = \rho_M^2$) and $\Omega_W := \operatorname{Tr}(Z_T^\top Z_P W)/(\|Z_T\|_F\|Z_P\|_F)$ (main-text Eq.~\ref{eq:omega_def}). Dividing Eq.~(\ref{eq:procrustes_expand}) by $n$:
\begin{equation}
\frac{1}{n}\|Z_T - Z_P W\|_F^2 = \rho_T^2 + \rho_P^2 - 2\rho_T\rho_P\, \Omega_W.
\end{equation}
Adding and subtracting $2\rho_T\rho_P$ completes the square:
\begin{equation}
\label{eq:exact_decomposition_general}
\frac{1}{n}\|Z_T - Z_P W\|_F^2 = \underbrace{(\rho_T - \rho_P)^2}_{\text{Scale Mismatch}} + \underbrace{2\rho_T\rho_P(1 - \Omega_W)}_{\text{Shape Mismatch}}.
\end{equation}
This is Proposition~\ref{prop:exact_decomposition} in its general form. \qed

\paragraph{Specialization 1: $W = I$ (identity alignment, main text).}
Setting $W = I$ gives the trace form $\Omega \equiv \Omega_{W=I} = \operatorname{Tr}(Z_T^\top Z_P)/(\|Z_T\|_F \|Z_P\|_F)$, and Eq.~(\ref{eq:exact_decomposition_general}) becomes
\begin{equation}
\frac{1}{n}\|Z_T - Z_P\|_F^2 = (\rho_T - \rho_P)^2 + 2\rho_T\rho_P(1 - \Omega),
\end{equation}
which is the form used throughout the main-text experiments and the regularizer of Sec.~\ref{subsec:shape_reg}.

\paragraph{Specialization 2: $W = W_N$ (Procrustes-optimal, ablation).}
The alignment that minimizes the residual is the classical orthogonal Procrustes problem~\cite{schonemann1966generalized, gower2004procrustes}:
\begin{equation}
W_N := \arg\min_{W \in \mathcal{O}(d)} \|Z_T - Z_P W\|_F^2 = \arg\max_{W \in \mathcal{O}(d)} \operatorname{Tr}(Z_T^\top Z_P W),
\end{equation}
where the second equality follows from Eq.~(\ref{eq:procrustes_expand}) since the first two terms are $W$-independent. Its closed-form SVD solution is well known; we restate it here for completeness. Let $Z_T^\top Z_P = U\Sigma V^\top$ be the SVD with $U, V \in \mathcal{O}(d)$ and $\Sigma = \mathrm{diag}(\sigma_1, \ldots, \sigma_d)$, $\sigma_i \ge 0$. By cyclic trace:
\begin{equation}
\operatorname{Tr}(Z_T^\top Z_P W) = \operatorname{Tr}(\Sigma\, V^\top W U) = \operatorname{Tr}(\Sigma R), \quad R := V^\top W U \in \mathcal{O}(d).
\end{equation}
Since $\sigma_i \ge 0$ and $|R_{ii}| \le 1$ for any orthogonal $R$, $\operatorname{Tr}(\Sigma R) = \sum_i \sigma_i R_{ii}$ is maximized at $R = I_d$, giving:
\begin{equation}
\label{eq:W_optimal}
W_N = V U^\top, \qquad \max_{W} \operatorname{Tr}(Z_T^\top Z_P W) = \sum_i \sigma_i = \|Z_T^\top Z_P\|_*.
\end{equation}
Therefore $\Omega_{W_N} = \Omega_N := \|Z_T^\top Z_P\|_*/(\|Z_T\|_F\|Z_P\|_F)$ (nuclear form, Appendix~\ref{app:tightness} Eq.~\ref{eq:omega_nuclear}), and Eq.~(\ref{eq:exact_decomposition_general}) at $W = W_N$ becomes
\begin{equation}
\min_{W \in \mathcal{O}(d)} \frac{1}{n}\|Z_T - Z_P W\|_F^2 = (\rho_T - \rho_P)^2 + 2\rho_T\rho_P(1 - \Omega_N),
\end{equation}
the form underlying the Sec.~\ref{subsec:ablation} ablation.
\begin{remark}[$W_N$ is the feature-side optimum, not the joint optimum]
The Procrustes alignment $W_N$ minimizes only the feature alignment residual $\delta(W)$, not the full bound $\delta(W) + \gamma(W)$, since the head term also depends on $W$ via $W H_T - H_P$. Appendix~\ref{app:joint_opt} treats the joint optimization in full and identifies the operative alignment for each setting studied here.
\end{remark}

\paragraph{Assembling the feature bound.}
Combining Eq.~(\ref{eq:delta_lipschitz}) with the general decomposition Eq.~(\ref{eq:exact_decomposition_general}):
\begin{equation}
\label{eq:delta_final}
\delta \le K_{\mathrm{feat}} \sqrt{(\rho_T - \rho_P)^2 + 2\rho_T\rho_P(1 - \Omega_W)}.
\end{equation}
At $W = I$, replace $\Omega_W$ with $\Omega$; at $W = W_N$, with $\Omega_N$.

\subsection{Step 4: bounding the head discrepancy $\gamma$}
\label{app:head_bound}

\paragraph{Lipschitz property with respect to logits.}
The head discrepancy measures how much predictions differ when two different ``effective heads'' ($W H_T$ vs.\ $H_P$) operate on the same proxy features. Let $z = \phi_P(x)$ denote the (unaligned) proxy feature. Then:
\begin{equation}
\gamma = \big|\mathcal{R}_{P \to T} - \mathcal{R}_P\big| = \Big|\mathbb{E}\!\big[\ell(z \cdot W H_T, y) - \ell(z \cdot H_P, y)\big]\Big|.
\end{equation}
To bound this, we use the Lipschitz property of cross-entropy with respect to \emph{logits}. The gradient of $\ell$ with respect to $v$ is $\nabla_v \ell = \hat{p} - e_y$, with norm:
\begin{equation}
\|\nabla_v \ell\|_2 = \|\hat{p} - e_y\|_2.
\end{equation}
Since $\hat{p}$ lies on the probability simplex, $\|\hat{p}\|_2 \le \|\hat{p}\|_1 = 1$, hence
\[
\|\hat{p} - e_y\|_2^2 \;=\; \|\hat{p}\|_2^2 - 2\hat{p}_y + 1 \;\le\; 1 - 2\hat{p}_y + 1 \;=\; 2(1 - \hat{p}_y) \;\le\; 2.
\]
The supremum $\sqrt{2}$ is approached as the model's predicted distribution concentrates on an incorrect class ($\hat{p}_y \to 0$). Thus $K_{\mathrm{pred}} \le \sqrt{2}$, yielding the Lipschitz inequality $|\ell(v_1, y) - \ell(v_2, y)| \le K_{\mathrm{pred}}\, \|v_1 - v_2\|_2$ on logits.

\paragraph{Deriving the covariance projection.}
Define the head misalignment matrix $\Delta H := W H_T - H_P \in \mathbb{R}^{d \times V}$. The logit difference for a single sample is $z \cdot \Delta H = \phi_P(x)(W H_T - H_P)$, and:
\begin{equation}
\begin{aligned}
\gamma &\le \mathbb{E}\!\big[|\ell(z \cdot W H_T, y) - \ell(z \cdot H_P, y)|\big] \\
&\le K_{\mathrm{pred}} \cdot \mathbb{E}_z\!\big[\|z \cdot \Delta H\|_2\big].
\end{aligned}
\end{equation}
Applying Jensen's inequality on the concave square root:
\begin{equation}
\big(\mathbb{E}[\|z \cdot \Delta H\|_2]\big)^2 \le \mathbb{E}\!\big[\|z \cdot \Delta H\|_2^2\big].
\end{equation}
Expanding the squared norm using the trace operator:
\begin{equation}
\begin{aligned}
\mathbb{E}\!\big[\|z \cdot \Delta H\|_2^2\big] &= \mathbb{E}\!\Big[\operatorname{Tr}\!\big(\Delta H^\top z^\top z\, \Delta H\big)\Big] \\
&= \operatorname{Tr}\!\Big(\Delta H^\top \underbrace{\mathbb{E}[z^\top z]}_{\Sigma_P} \Delta H\Big) \\
&= \|\Sigma_P^{1/2} \Delta H\|_F^2.
\end{aligned}
\end{equation}
Taking the square root:
\begin{equation}
\label{eq:gamma_final}
\gamma \le K_{\mathrm{pred}} \cdot \|\Sigma_P^{1/2}(W H_T - H_P)\|_F.
\end{equation}

\paragraph{Interpretation.} The covariance weighting $\Sigma_P^{1/2}$ projects the head error onto the \emph{active subspace} of the data. If the heads disagree only in directions orthogonal to the feature distribution (the null space of $\Sigma_P$), this disagreement has zero cost---it does not affect any prediction. This is sharper than the loose spectral bound $\gamma \le K_{\mathrm{pred}} \|W H_T - H_P\|_2 \cdot \mathbb{E}[\|z\|_2]$, which conservatively assumes worst-case alignment between features and head error. The covariance weighting $\Sigma_P^{1/2}$ instead aligns the bound with the data-active directions: head misalignment in directions the features rarely traverse contributes little, and disagreement on $\mathrm{null}(\Sigma_P)$ contributes nothing.

\subsection{Step 5: assembling the unified bound}
\label{app:assembly}

For any $W \in \mathcal{O}(d)$, combining the risk decomposition (Eq.~\ref{eq:app_triangle}), the feature bound (Eq.~\ref{eq:delta_final}), and the head bound (Eq.~\ref{eq:gamma_final}):
\begin{equation}
|\mathcal{R}_T - \mathcal{R}_P| \;\le\; \delta + \gamma \;\le\; K_{\mathrm{feat}} \sqrt{(\rho_T - \rho_P)^2 + 2\rho_T\rho_P(1 - \Omega_W)} \;+\; K_{\mathrm{pred}} \cdot \|\Sigma_P^{1/2}(W H_T - H_P)\|_F.
\end{equation}
This is Theorem~\ref{thm:unified_bound} in its general form. The two specializations used in the paper:
\begin{itemize}
    \item \textbf{$W = I$ (identity alignment, main text):} $\Omega_W = \Omega$ (trace form) and $W H_T - H_P = H_T - H_P$, giving the bound used throughout Sec.~\ref{sec:experiments} and the regularizer of Sec.~\ref{subsec:shape_reg}.
    \item \textbf{$W = W_N$ (Procrustes-optimal, ablation):} $\Omega_W = \Omega_N$ (nuclear form, achieves the tightest feature-side specialization), giving the bound underlying the Sec.~\ref{subsec:ablation} ablation.
\end{itemize}
This completes the proof of Theorem~\ref{thm:unified_bound}. \qed

\section{Tightness of nuclear norm over Frobenius norm}
\label{app:tightness}

We define the nuclear-form and Frobenius-form Procrustes similarities, show that the former yields a strictly tighter alignment residual than the latter, and clarify the relationship to CKA.

\paragraph{Definitions.}
The \emph{nuclear-form} Procrustes similarity is
\begin{equation}
\label{eq:omega_nuclear}
\Omega_N(Z_T, Z_P) \;:=\; \frac{\|Z_T^\top Z_P\|_*}{\|Z_T\|_F \|Z_P\|_F}
\;=\; \max_{W \in \mathcal{O}(d)} \frac{\operatorname{Tr}(Z_T^\top Z_P\, W)}{\|Z_T\|_F \|Z_P\|_F},
\end{equation}
the maximum of the trace family $\Omega_W$ (Sec.~\ref{subsec:unified_bound}) over orthogonal alignments, attained at the Procrustes-optimal $W_N$ (Appendix~\ref{app:detailed_proofs}). The \emph{Frobenius-form} similarity is $\Omega_F := \|Z_T^\top Z_P\|_F / (\|Z_T\|_F \|Z_P\|_F)$; unlike $\Omega_W$, it does not arise from the alignment residual at any $W$ and is treated here as an external comparison object connecting to CKA.

\paragraph{Statement.}
For any feature matrices $Z_T, Z_P$:
\begin{equation}
\Omega_F \le \Omega_N,
\end{equation}
with equality if and only if the cross-moment matrix $Z_T^\top Z_P$ has at most one nonzero singular value (i.e., rank $\le 1$).

\paragraph{Proof.}
Let $\sigma_1, \ldots, \sigma_d$ be the singular values of $Z_T^\top Z_P$. Then:
\begin{equation}
\|Z_T^\top Z_P\|_F = \sqrt{\sum_i \sigma_i^2}, \qquad \|Z_T^\top Z_P\|_* = \sum_i \sigma_i.
\end{equation}
Since $\sigma_i \ge 0$, the basic inequality $\sum_i \sigma_i^2 \le \big(\sum_i \sigma_i\big)^2$ (cross-terms are non-negative) gives
\begin{equation}
\|Z_T^\top Z_P\|_F = \sqrt{\sum_i \sigma_i^2} \le \sum_i \sigma_i = \|Z_T^\top Z_P\|_*,
\end{equation}
i.e., $\|\mathbf{x}\|_2 \le \|\mathbf{x}\|_1$ for any non-negative vector. Dividing both sides by $\|Z_T\|_F \|Z_P\|_F$ gives $\Omega_F \le \Omega_N$.

\paragraph{Consequence for risk bounding.}
$\Omega_F$ does not arise from the alignment residual $\|Z_T - Z_P W\|_F^2$ at any $W \in \mathcal{O}(d)$ (which produces only trace-based similarities $\Omega_W$); we treat it as an external comparison object. Substituting $\Omega_F$ into the bound formula in place of $\Omega_W$ yields
\begin{equation}
(\rho_T - \rho_P)^2 + 2\rho_T\rho_P(1 - \Omega_F) \ge (\rho_T - \rho_P)^2 + 2\rho_T\rho_P(1 - \Omega_N),
\end{equation}
i.e., a \emph{strictly looser} value than the alignment-derived nuclear-form term. Among the alignment-derived members $\{\Omega_W\}$, the nuclear form $\Omega_N$ is the tightest because it exactly solves $\max_{W \in \mathcal{O}(d)} \operatorname{Tr}(Z_T^\top Z_P W)$; the trace form $\Omega$ used in the main text (Eq.~\ref{eq:omega_def}) satisfies $\Omega \le \Omega_N$ in general, with equality on $Z_T^\top Z_P$ symmetric PSD (Appendix~\ref{app:trace_specialization}).

\paragraph{Relation to CKA.}
Linear CKA~\cite{kornblith2019similarity} is defined as $\mathrm{CKA}(Z_T, Z_P) = \|Z_T^\top Z_P\|_F^2 / (\|Z_T^\top Z_T\|_F \cdot \|Z_P^\top Z_P\|_F)$, which differs from $\Omega_F^2 = \|Z_T^\top Z_P\|_F^2 / (\|Z_T\|_F^2 \|Z_P\|_F^2)$ in its denominator. The two denominators are related by the inequality $\|Z_M^\top Z_M\|_F \le \|Z_M\|_F^2 = \operatorname{Tr}(Z_M^\top Z_M)$ (since $\|A\|_F \le \operatorname{Tr}(A)$ for any positive semidefinite $A$, by $\|\mathbf{x}\|_2 \le \|\mathbf{x}\|_1$ on the non-negative eigenvalues), which gives $\mathrm{CKA} \ge \Omega_F^2$. Hence CKA can overestimate alignment relative to $\Omega_F$, and it is not directly substitutable into our Procrustes residual. More fundamentally, CKA is not derived from any alignment optimization and therefore lacks the alignment-grounded meaning of $\Omega_N$, which exactly solves the Orthogonal Procrustes problem.

\section{Extension to general orthogonal alignments}
\label{app:general_W}

The main text (Theorem~\ref{thm:unified_bound}) states the bound at the identity alignment $W = I$. Here we lift Theorem~\ref{thm:unified_bound} to arbitrary orthogonal $W$, characterize the $W = I$ trace and $W = W_N$ nuclear specializations (the latter used in the Sec.~\ref{subsec:ablation} ablation), and treat the joint optimization over feature and head terms. The two specializations encode different priorities: $W_N$ minimizes the feature alignment residual but generally inflates the head term, while $W = I$ keeps the head term at its minimum (vanishing exactly under frozen \texttt{lm\_head}) and is SVD-free differentiable, properties that the LLM settings studied here---PTQ with FP16 head and frozen-head LoRA---require.

\subsection{The general bound for arbitrary $W$}
\label{app:general_bound}

The exact decomposition derived in Appendix~\ref{app:procrustes} (Eq.~\ref{eq:exact_decomposition_general}) holds for every $W \in \mathcal{O}(d)$ via the alignment-dependent similarity $\Omega_W$ of Eq.~\ref{eq:omega_def}, yielding a family of risk bounds parameterized by $W$:
\begin{equation}
|\mathcal{R}_T - \mathcal{R}_P| \;\le\; \underbrace{K_{\mathrm{feat}} \sqrt{(\rho_T - \rho_P)^2 + 2\rho_T\rho_P(1 - \Omega_W)}}_{\delta(W)} + \underbrace{K_{\mathrm{pred}} \|\Sigma_P^{1/2}(W H_T - H_P)\|_F}_{\gamma(W)}.
\end{equation}
The Procrustes solution $W_N$ minimizes $\delta(W)$ alone, yielding $\Omega_{W_N} = \Omega_N$ (the nuclear-norm similarity, Eq.~\ref{eq:omega_nuclear}). However, $W_N$ enters the head term $\gamma(W_N)$, which can be inflated when the rotation is applied to a nearly preserved head ($H_T \approx H_P$). This creates a fundamental trade-off: $W_N$ gives the tightest feature bound but may worsen the head bound, while $W = I$ preserves head coherence at the cost of a slightly looser feature bound.

\subsection{The trace specialization ($W = I$)}
\label{app:trace_specialization}

Setting $W = I$ recovers the main-text trace similarity $\Omega = \Omega_{W=I}$ (Eq.~\ref{eq:omega_def}). Two structural properties characterize this specialization.

\paragraph{Ordering of similarities.}
Since the Procrustes solution maximizes $\operatorname{Tr}(Z_T^\top Z_P W)$ over $\mathcal{O}(d)$:
\begin{equation}
\operatorname{Tr}(Z_T^\top Z_P) \;\le\; \|Z_T^\top Z_P\|_*,
\end{equation}
which gives $\Omega \le \Omega_N$. Equality holds if and only if $W_N = I$, i.e., $Z_T^\top Z_P$ is symmetric positive semidefinite.

\paragraph{When does $\Omega = \Omega_N$?}
Let $Z_T^\top Z_P = U\Sigma V^\top$ be the SVD. Then $\Omega = \Omega_N$ iff $VU^\top = I$, i.e., $U = V$, which holds when $Z_T^\top Z_P$ is symmetric positive semidefinite (SPSD). In post-training quantization with mild noise, $Z_T^\top Z_P \approx Z_T^\top Z_T$ is approximately SPSD, so $\Omega \approx \Omega_N$. As quantization becomes more aggressive, asymmetric perturbations cause $\Omega$ to fall below $\Omega_N$, and this gap reflects genuine asymmetric distortion that a rotation-invariant metric would mask.

\paragraph{Head-side simplification.}
At $W = I$, the head term $K_{\mathrm{pred}}\,\|\Sigma_P^{1/2}(H_T - H_P)\|_F$ carries no rotational contamination and vanishes whenever the prediction head is preserved (FP16 head under PTQ, frozen head under LoRA), recovering the regimes used throughout Sec.~\ref{sec:experiments} and the regularizer of Sec.~\ref{subsec:shape_reg}.

\subsection{Joint optimization on the Stiefel manifold}
\label{app:joint_opt}

The tightest bound over all orthogonal alignments requires solving:
\begin{equation}
\label{eq:joint_obj}
W_{\mathrm{opt}} = \arg\min_{W \in \mathcal{O}(d)}
\Big[
\underbrace{K_{\mathrm{feat}} \sqrt{\tfrac{1}{n}\|Z_T - Z_P W\|_F^2}}_{\delta(W)}
\;+\;
\underbrace{K_{\mathrm{pred}}\,\|\Sigma_P^{1/2}(W H_T - H_P)\|_F}_{\gamma(W)}
\Big].
\end{equation}
Unlike the standard Procrustes problem (which minimizes $\delta$ alone and admits the closed-form $W_N = VU^\top$), the addition of $\gamma(W)$ introduces a quadratic dependence on $W$ through the head term, related to the \emph{Weighted Orthogonal Procrustes Problem} (WOPP)~\cite{gower2004procrustes}.

\paragraph{Why no closed-form solution exists.}
The feature term $\delta(W)$ is concave in $\operatorname{Tr}(Z_T^\top Z_P W)$, while $\gamma(W)$ is convex in $W$. Their sum is neither convex nor concave on $\mathcal{O}(d)$, precluding a simple variational characterization.

\paragraph{Special case: isotropic features.}
If $\Sigma_P \approx \lambda I$, the head term simplifies to $K_{\mathrm{pred}}\sqrt{\lambda}\,\|W H_T - H_P\|_F$, which still depends on $W$ through the cross-term $\operatorname{Tr}(H_P^\top W H_T)$ when $\|W H_T - H_P\|_F^2$ is expanded. The joint objective then minimizes $K_{\mathrm{feat}}\sqrt{\|Z_T - Z_P W\|_F^2/n} + K_{\mathrm{pred}}\sqrt{\lambda}\|W H_T - H_P\|_F$, and each term individually admits a closed-form Procrustes minimizer; the sum does not. The isotropic assumption is, in any case, unrealistic for LLM representations, which exhibit well-documented anisotropy~\cite{ethayarajh2019contextual}.

\paragraph{Numerical solution.}
For the general case, $W_{\mathrm{opt}}$ can be approximated via Riemannian gradient descent on the Stiefel manifold, but this entangles scale, shape, and head components, destroying the clean physical interpretation.

\paragraph{Practical irrelevance for this paper's regimes.}
In the two primary settings:
\begin{itemize}
    \item \textbf{Quantization (GPTQ/BnB: $H_T = H_P$; GGUF: $H_T \approx H_P$):} The Procrustes-optimal $W_N$ minimizes the feature term $\delta(W)$ alone but inflates the head term $\gamma(W)$ when the head is approximately preserved (cf.~Appendix~\ref{app:general_bound} trade-off). We adopt $W = I$ in main analyses, where $\gamma$ vanishes for FP16-head retention.
    \item \textbf{LoRA ($H_T = H_P$):} The head term becomes $K_{\mathrm{pred}}\|\Sigma_P^{1/2}(W - I) H_T\|_F$, which vanishes at $W = I$ (and is generically positive at $W \neq I$). We adopt $W = I$ throughout the LoRA experiments and the shape regularizer of Sec.~\ref{subsec:shape_reg} as it yields the cleanest decomposition: under the frozen LoRA head ($H_T = H_P$) the choice $W = I$ gives $\Sigma_P^{1/2}(I H_T - H_P) = 0$, so $\gamma = 0$; the scale arm $(\rho_T - \rho_P)^2$ is $W$-invariant (observed but not actively regularized, since LoRA primarily perturbs shape rather than scale; Sec.~\ref{subsec:decompose}); and the shape arm $1 - \Omega$ becomes the single differentiable training-time target. The Procrustes-optimal $W_N \neq I$ would tighten the shape arm marginally ($\Omega_N \ge \Omega$) at the cost of inflating $\gamma$ and breaking the at-source shape contraction the regularizer relies on.
\end{itemize}
The Riemannian formulation becomes relevant only when comparing models with \emph{both} rotated features and divergent heads (e.g., full-parameter SFT). We leave this setting to future work.

\section{Formal extension to autoregressive generation}
\label{app:ar_extension}

The bound of Theorem~\ref{thm:unified_bound} applies to any matrix pair $(Z_T, Z_P) \in \mathbb{R}^{n \times d}$ of feature vectors, whatever the origin of the rows. We spell out here the autoregressive instantiation informally summarized in Sec.~\ref{subsec:ar_extension}.

\paragraph{From tokens to matrices.}
Fix a single sequence $(c, y_1, \ldots, y_{|y|})$. Each position $\tau$ supplies a (context, target-token) pair instantiating Eq.~(\ref{eq:ce_def}) with feature $\phi_M(c, y_{<\tau}) \in \mathbb{R}^d$ and loss $\ell(\phi_M(c, y_{<\tau}) H_M,\, y_\tau)$. Stacking the $|y|$ feature vectors into a $|y| \times d$ block recovers exactly the per-row setting that Theorem~\ref{thm:unified_bound} controls.

\begin{corollary}[Autoregressive Risk Bound]
\label{cor:ar_bound}
For a single sequence $(c, y_1, \ldots, y_{|y|})$, let $Z_T^{\mathrm{AR}}, Z_P^{\mathrm{AR}} \in \mathbb{R}^{|y| \times d}$ be the feature matrices stacked as above, with RMS scales $\rho_T^{\mathrm{AR}}, \rho_P^{\mathrm{AR}}$, Procrustes similarity $\Omega^{\mathrm{AR}}$, and covariance $\Sigma_P^{\mathrm{AR}}$. Then
\begin{equation}
\label{eq:ar_bound}
|\mathcal{R}_T^{\mathrm{AR}} - \mathcal{R}_P^{\mathrm{AR}}| \;\le\; K_{\mathrm{feat}} \sqrt{(\rho_T^{\mathrm{AR}} - \rho_P^{\mathrm{AR}})^2 + 2\,\rho_T^{\mathrm{AR}}\,\rho_P^{\mathrm{AR}}(1 - \Omega^{\mathrm{AR}})} \;+\; K_{\mathrm{pred}} \,\big\|(\Sigma_P^{\mathrm{AR}})^{1/2}(H_T - H_P)\big\|_F.
\end{equation}
\end{corollary}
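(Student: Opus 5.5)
The plan is to reduce Corollary~\ref{cor:ar_bound} to Theorem~\ref{thm:unified_bound} at $W=I$ by treating a single sequence as an empirical distribution over positions. Fix the sequence $(c,y_1,\ldots,y_{|y|})$ and let $\mathcal{D}^{\mathrm{AR}}$ be the uniform distribution on the $|y|$ pairs $(x_\tau, y_\tau)$ with $x_\tau := (c, y_{<\tau})$. Then $\mathcal{R}_M^{\mathrm{AR}}$ in Eq.~(\ref{eq:ar_risk}) restricted to this sequence is exactly $\mathbb{E}_{(x,y)\sim\mathcal{D}^{\mathrm{AR}}}[\ell(\phi_M(x)H_M, y)]$, i.e.\ the risk of Eq.~(\ref{eq:ce_def}) under $\mathcal{D}^{\mathrm{AR}}$. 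The teacher-forced features $\phi_M(c,y_{<\tau})$ are deterministic functions of $x_\tau$, so the $\tau$-th row of $Z_M^{\mathrm{AR}}$ is $\phi_M(x_\tau)$ for both $M\in\{T,P\}$, with rows aligned on the shared inputs.

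Next I will check that every expectation appearing in the proof of Theorem~\ref{thm:unified_bound} (Appendix~\ref{app:detailed_proofs}) coincides with the corresponding matrix quantity when $n=|y|$ and the expectation is the uniform average over rows. In Step~2, Jensen gives $\mathbb{E}_x\|\phi_T(x)-\phi_P(x)\|_2^2 = \tfrac{1}{|y|}\|Z_T^{\mathrm{AR}}-Z_P^{\mathrm{AR}}\|_F^2$. In Step~3, Proposition~\ref{prop:exact_decomposition} is purely algebraic in the matrices, so it yields the scale and shape terms in $\rho^{\mathrm{AR}}$ and $\Omega^{\mathrm{AR}}$. In Step~4, $\mathbb{E}[z^\top z] = (Z_P^{\mathrm{AR}})^\top Z_P^{\mathrm{AR}}/|y| = \Sigma_P^{\mathrm{AR}}$. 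The Lipschitz constants $K_{\mathrm{feat}}$ and $K_{\mathrm{pred}}$ are pointwise in $(z,y)$ and depend only on $H_T$, so they carry over unchanged. Assembling as in Step~5 with $W=I$ gives Eq.~(\ref{eq:ar_bound}).

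The only real obstacle is the interpretation of the expectation. The pointwise Lipschitz arguments are unaffected by dependence between positions, since teacher forcing fixes every context; still, I must make sure the risk is normalized by $1/|y|$ exactly as the RMS scale and covariance are normalized by $n=|y|$. If one instead wants the dataset-level $\mathcal{R}^{\mathrm{AR}}$ averaged over sequences, I would state the per-sequence bound and average it, using the triangle inequality on the outer expectation. I would then note that applying Jensen once more, or stacking all sequences with per-row weights $1/(N|y^{(i)}|)$, recovers a single-matrix bound, but the corollary as stated only needs the single-sequence case.
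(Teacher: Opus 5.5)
Your proposal is correct and follows the paper's argument. The paper observes that the Lipschitz analysis and the exact scale--shape decomposition depend only on the stacked matrices, and then applies Theorem~\ref{thm:unified_bound} at $W=I$ to $(Z_T^{\mathrm{AR}}, Z_P^{\mathrm{AR}})$. Your explicit choice of the uniform distribution over the $|y|$ positions is what makes the expectation-to-row-average identities exact (the Jensen step and $\mathbb{E}[z^\top z]=\Sigma_P^{\mathrm{AR}}$); the paper leaves this implicit, so your version is a useful clarification of the same route rather than a different one.
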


\begin{proof}
Each row of $Z_M^{\mathrm{AR}}$ is a feature vector of dimension $d$, and the Lipschitz analysis of Appendix~\ref{app:kfeat} together with the Procrustes decomposition of Appendix~\ref{app:procrustes} depend only on the matrix shape, not on the provenance of its rows. Applying Theorem~\ref{thm:unified_bound} to $(Z_T^{\mathrm{AR}}, Z_P^{\mathrm{AR}})$ yields the stated inequality.
\end{proof}

This unification enables one bound to cover point-wise classification (MMLU, ARC, where $|y|{=}1$), short-horizon QA (SQuAD, TriviaQA, where $y$ is a 1--5 token span), and multi-step reasoning (GSM8K, where $y$ is a chain-of-thought solution).


\section{Model and quantization details}
\label{app:model_details}

\subsection{Target models}

Table~\ref{tab:target_models} lists all target (full-precision) models evaluated in the quantization experiments.
All targets are loaded in BF16 as the reference precision; an FP16 proxy is additionally included for each model to measure the BF16$\to$FP16 precision-drift baseline.

\begin{table}[h]
\centering
\caption{Target models used in quantization experiments. All models are loaded in BF16.}
\label{tab:target_models}
\small
\begin{tabular}{@{}clllrrr@{}}
\toprule
\multicolumn{1}{c}{\textbf{\#}} & \multicolumn{1}{c}{\textbf{Family}} & \multicolumn{1}{c}{\textbf{HuggingFace ID}} & \multicolumn{1}{c}{\textbf{Variant}} & \multicolumn{1}{c}{\textbf{Params}} & \multicolumn{1}{c}{$\boldsymbol{d}$} & \multicolumn{1}{c}{$\boldsymbol{|V|}$} \\
\midrule
1  & Qwen 3     & \texttt{Qwen/Qwen3-8B-Base}                         & Base      & 8B   & 4096 & 151K \\
2  & Llama 3.1  & \texttt{meta-llama/Meta-Llama-3.1-8B}               & Base      & 8B   & 4096 & 128K \\
3  & Ministral 3 & \texttt{mistralai/Ministral-3-8B-Base-2512}         & Base      & 8B   & 4096 & 131K \\
4  & Qwen 3     & \texttt{Qwen/Qwen3-8B}                              & Instruct  & 8B   & 4096 & 151K \\
5  & Llama 3.1  & \texttt{meta-llama/Meta-Llama-3.1-8B-Instruct}      & Instruct  & 8B   & 4096 & 128K \\
6  & Ministral 3 & \texttt{mistralai/Ministral-3-8B-Instruct-2512}    & Instruct  & 8B   & 4096 & 131K \\
7  & DeepSeek R1 & \texttt{deepseek-ai/DeepSeek-R1-Distill-Llama-8B} & Distilled & 8B   & 4096 & 128K \\
\bottomrule
\end{tabular}
\end{table}

\noindent
Models 2, 5 (\texttt{meta-llama}) are gated and require accepting the Meta Community License on HuggingFace before access.

\subsection{Quantization backends and bit-widths}
\label{app:quant_tiers}

Every target model is evaluated against up to four quantization backends.
Table~\ref{tab:quant_tiers} summarises the shared GGUF and BitsAndBytes (BnB) configurations applied uniformly across all models.

\begin{table}[h]
\centering
\caption{Shared quantization tiers applied to all target models. GGUF quants use the \texttt{llama.cpp} k-quant scheme; BnB quants use on-the-fly quantization from the full-precision checkpoint.}
\label{tab:quant_tiers}
\small
\begin{tabular}{@{}llll@{}}
\toprule
\multicolumn{1}{c}{\textbf{Tier}} & \multicolumn{1}{c}{\textbf{Backend}} & \multicolumn{1}{c}{\textbf{Tag}} & \multicolumn{1}{c}{\textbf{Description}} \\
\midrule
FP16 reference & dtype   & \texttt{dtype:float16} & BF16$\to$FP16 precision drift baseline \\
\midrule
8-bit          & GGUF    & \texttt{Q8\_0}         & Round-to-nearest, 8-bit \\
               & BnB     & \texttt{bnb:int8}      & LLM.int8() absmax \\
6-bit          & GGUF    & \texttt{Q6\_K}         & k-quant, 6-bit (super-blocks) \\
5-bit          & GGUF    & \texttt{Q5\_K\_M}      & k-quant, 5-bit (medium) \\
4-bit          & GGUF    & \texttt{Q4\_K\_M}      & k-quant, 4-bit (medium) \\
               & BnB     & \texttt{bnb:nf4}       & NormalFloat4 (QLoRA format) \\
               & BnB     & \texttt{bnb:fp4}       & FP4 with FP16 compute \\
3-bit          & GGUF    & \texttt{Q3\_K\_M}      & k-quant, 3-bit (medium) \\
2-bit          & GGUF    & \texttt{Q2\_K}         & k-quant, 2-bit \\
\bottomrule
\end{tabular}
\end{table}

\subsection{Per-model GPTQ repositories}
\label{app:gptq_awq}

Table~\ref{tab:gptq_awq} lists the pre-quantized GPTQ checkpoint used for each target model.
All repositories are publicly hosted on HuggingFace and loaded via the standard \texttt{transformers} API unless otherwise noted.

\begin{table}[h]
\centering
\caption{Pre-quantized GPTQ checkpoints per model.
One repository per model, corresponding to the variant actually loaded in our experiments;
``---'' indicates no GPTQ checkpoint was evaluated for that model.
GPTQ bit-width is 4-bit (the canonical setting reported in Sec.~\ref{subsec:exp_setup}) for all models except Qwen3-8B-Instruct, where the only public checkpoint we could load uses 8-bit.}
\label{tab:gptq_awq}
\small
\setlength{\tabcolsep}{4pt}
\begin{tabular}{@{}cl p{8cm}@{}}
\toprule
\multicolumn{1}{c}{\textbf{\#}} & \multicolumn{1}{c}{\textbf{Model}} & \multicolumn{1}{c}{\textbf{GPTQ Repository}} \\
\midrule
1  & Qwen3-8B-Base       & --- \\
2  & Llama-3.1-8B        & \texttt{ModelCloud/Meta-Llama-3.1-8B-gptq-4bit} \\
3  & Ministral-3-Base    & --- \\
4  & Qwen3-8B            & \texttt{JunHowie/Qwen3-8B-GPTQ-Int8} \\
5  & Llama-3.1-8B-Inst.  & \texttt{ModelCloud/Meta-Llama-3.1-8B-Instruct-gptq-4bit} \\
6  & Ministral-3-Inst.   & --- \\
7  & DeepSeek-R1-Distill & \texttt{jakiAJK/DeepSeek-R1-Distill-Llama-8B\_GPTQ-int4} \\
\bottomrule
\end{tabular}
\end{table}

\subsection{GGUF repositories}
\label{app:gguf_repos}

Table~\ref{tab:gguf_repos} lists the GGUF repositories and filename conventions for each model.
GGUF files follow one of two naming conventions depending on the repository provider:
\emph{dot convention} (\texttt{Model.Q4\_K\_M.gguf}) used by QuantFactory and mradermacher,
or \emph{dash convention} (\texttt{Model-Q4\_K\_M.gguf}) used by Qwen, bartowski, and most official releases.

\begin{table}[h]
\centering
\caption{GGUF repositories and filename templates. Each repository provides quants \texttt{Q8\_0}, \texttt{Q6\_K}, \texttt{Q5\_K\_M}, \texttt{Q4\_K\_M}, \texttt{Q3\_K\_M}, and \texttt{Q2\_K} unless otherwise noted.}
\label{tab:gguf_repos}
\small
\setlength{\tabcolsep}{4pt}
\resizebox{\textwidth}{!}{%
\begin{tabular}{@{}clll@{}}
\toprule
\multicolumn{1}{c}{\textbf{\#}} & \multicolumn{1}{c}{\textbf{Model}} & \multicolumn{1}{c}{\textbf{GGUF Repository}} & \multicolumn{1}{c}{\textbf{Filename Template}} \\
\midrule
1  & Qwen3-8B-Base        & \texttt{mradermacher/Qwen3-8B-Base-GGUF}       & \texttt{Qwen3-8B-Base.\{Q\}.gguf} \\
2  & Llama-3.1-8B         & \texttt{QuantFactory/Meta-Llama-3.1-8B-GGUF}   & \texttt{Meta-Llama-3.1-8B.\{Q\}.gguf} \\
3  & Ministral-3-Base     & \texttt{mradermacher/Ministral-3-8B-Base-2512-GGUF} & \texttt{Ministral-3-8B-Base-2512.\{Q\}.gguf} \\
4  & Qwen3-8B             & \texttt{Qwen/Qwen3-8B-GGUF}                    & \texttt{Qwen3-8B-\{Q\}.gguf} \\
5  & Llama-3.1-8B-Inst.   & \texttt{bartowski/Meta-Llama-3.1-8B-Instruct-GGUF} & \texttt{...-Instruct-\{Q\}.gguf} \\
6  & Ministral-3-Inst.    & \texttt{bartowski/mistralai\_Ministral-3-8B-Instruct-2512-GGUF} & \texttt{...-2512-\{Q\}.gguf} \\
7  & DeepSeek-R1-Distill  & \texttt{bartowski/DeepSeek-R1-Distill-Llama-8B-GGUF} & \texttt{...-Llama-8B-\{Q\}.gguf} \\
\bottomrule
\end{tabular}}
\end{table}

\subsection{Backend coverage summary}
\label{app:coverage}

Table~\ref{tab:coverage} provides a per-model summary of quantization backend coverage across the combinations we evaluated.

\begin{table}[h]
\centering
\caption{Quantization backend coverage per model.
\cmark\ = included in our experiments;
--- = not evaluated in this study.}
\label{tab:coverage}
\small
\begin{tabular}{@{}clccc@{}}
\toprule
\multicolumn{1}{c}{\textbf{\#}} & \multicolumn{1}{c}{\textbf{Model}} & \multicolumn{1}{c}{\textbf{BnB}} & \multicolumn{1}{c}{\textbf{GGUF}} & \multicolumn{1}{c}{\textbf{GPTQ}} \\
\midrule
1  & Qwen3-8B-Base               & \cmark & \cmark & ---    \\
2  & Llama-3.1-8B                & \cmark & \cmark & \cmark \\
3  & Ministral-3-8B-Base         & \cmark & \cmark & ---    \\
4  & Qwen3-8B                    & \cmark & \cmark & \cmark \\
5  & Llama-3.1-8B-Instruct       & \cmark & \cmark & \cmark \\
6  & Ministral-3-8B-Instruct     & \cmark & \cmark & ---    \\
7  & DeepSeek-R1-Distill-Llama   & \cmark & \cmark & \cmark \\
\midrule
   & \textbf{Total}              & 7/7    & 7/7    & 4/7    \\
\bottomrule
\end{tabular}
\end{table}

\paragraph{Experiment scope.}
Each model is evaluated on the backend/bit-width combinations marked \cmark\ in Table~\ref{tab:coverage}: GGUF (6 bit-widths) and BitsAndBytes (3 configurations) on every model, plus the GPTQ checkpoint listed in Table~\ref{tab:gptq_awq} where available.
The cells marked ``---'' are outside our experiment scope under the compute and time budget for this study.


\section{Quantization replication, feature-only scatter, and per-model tables}
\label{app:quant_tables}

\subsection{Replication on Ministral and DeepSeek}
\label{app:replication_mistral_deepseek}

Figure~\ref{fig:quant_grid_bound_extra} replicates the main quantization scatter (Fig.~\ref{fig:quant_grid_bound}) on the two additional 8B families---Ministral-3-8B and DeepSeek-R1-Distill-Llama-8B---kept out of the main text for symmetry with the LoRA experiments (Sec.~\ref{subsec:exp_setup}). The bound tracks $|\Delta\mathcal{R}|$ on these two families with patterns consistent with the Llama--Qwen main analysis, supporting the cross-family generality of the PRISM framework. Per-variant numerical decompositions for all four families are in Appendix~\ref{app:per_model_tables}.

\begin{figure}[h]
    \centering
    \includegraphics[width=0.98\linewidth]{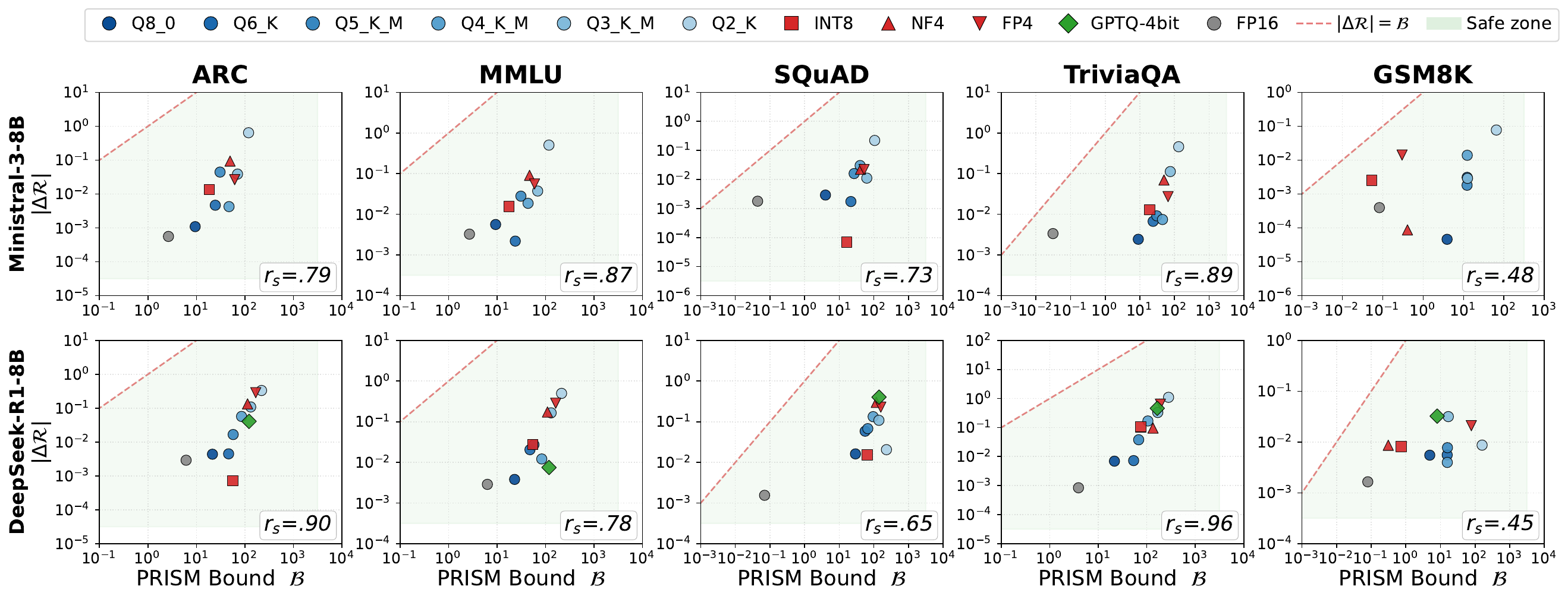}
    \caption{\textbf{PTQ replication on Ministral-3-8B and DeepSeek-R1-Distill-Llama-8B.} Counterpart of main-text Fig.~\ref{fig:quant_grid_bound} (Llama-3.1-8B and Qwen3-8B), with identical axes, PTQ family color coding, and benchmark set. Rows: Ministral-3-8B, DeepSeek-R1-Distill-Llama-8B. Columns: ARC, MMLU, SQuAD, TriviaQA, GSM8K. Per-subplot Spearman $r_s$ is annotated in each panel.}
    \label{fig:quant_grid_bound_extra}
\end{figure}

\subsection{Feature-only scatter}
\label{app:feature_only}

Figure~\ref{fig:quant_grid_feature} reports the feature-alignment-only replication of the main quantization result (Fig.~\ref{fig:quant_grid_bound}), with the x-axis replaced by the head-free term $\delta = K_{\mathrm{feat}}\sqrt{(\rho_T{-}\rho_P)^2 + 2\rho_T\rho_P(1-\Omega)}$. The grid isolates the backbone contribution to the bound and is the source of the aggregate feature-only correlation $\overline{r_s}(\delta)$ cited in Sec.~\ref{subsec:quant_exp}.

\begin{figure}[h]
    \centering
    \includegraphics[width=0.98\linewidth]{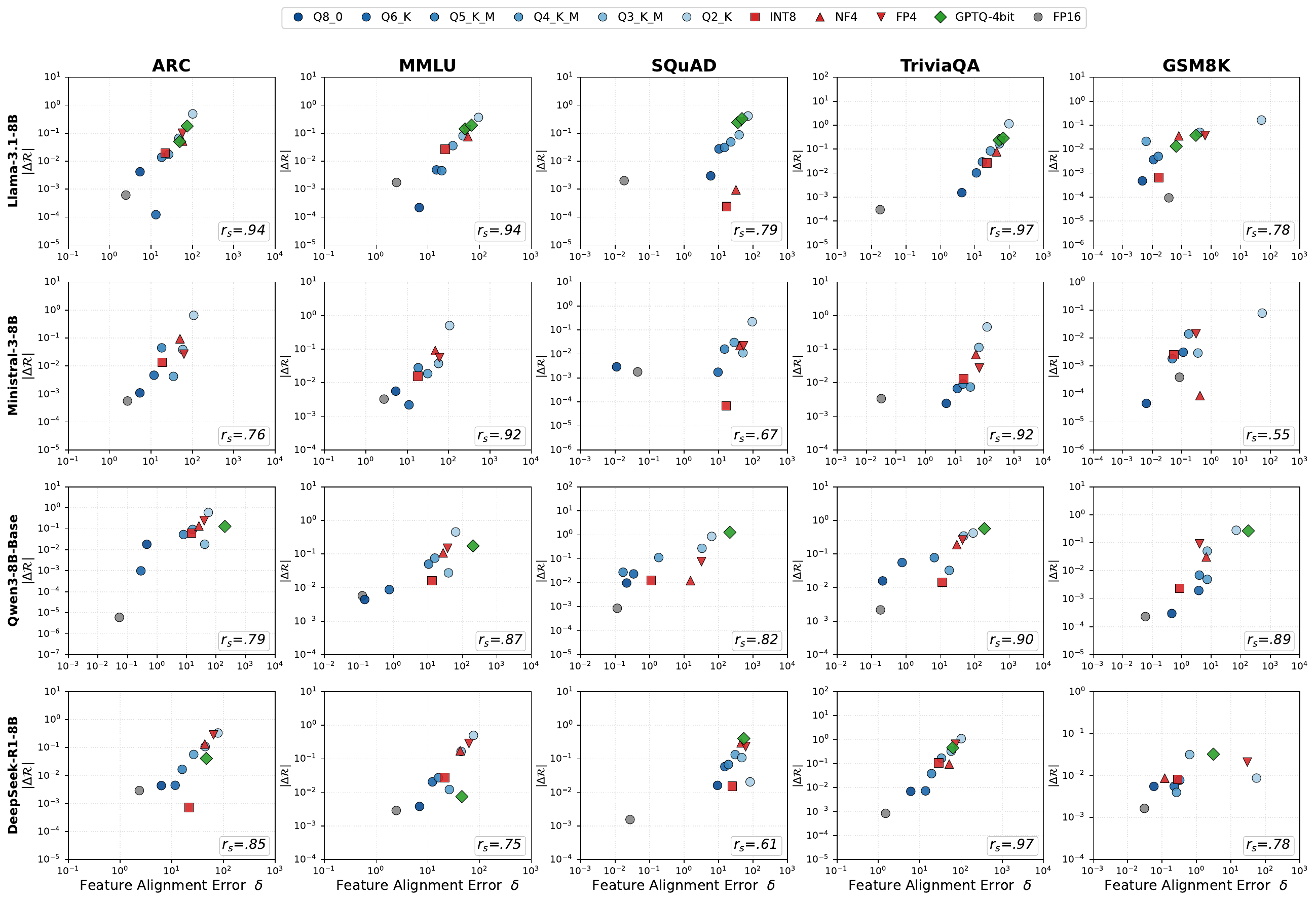}
    \caption{\textbf{Feature alignment error $\delta$ alone is already highly predictive of the risk gap.} Identical layout to Fig.~\ref{fig:quant_grid_bound} but with x-axis replaced by the feature-only term $\delta$. Because most protocols retain the \texttt{lm\_head} in FP16, the head-discrepancy term $\gamma$ is zero for GPTQ and BitsAndBytes and small for GGUF configurations that quantize output embeddings; $\delta$ therefore captures most of the predictive signal of Fig.~\ref{fig:quant_grid_bound}, confirming that scale collapse $(\Delta\rho)^2$ and shape mismatch $(1-\Omega)$ are the dominant degradation channels under PTQ (Sec.~\ref{subsec:quantization}).}
    \label{fig:quant_grid_feature}
\end{figure}

\subsection{Per-model decomposition tables}
\label{app:per_model_tables}

We report the full geometric decomposition (Eq.~\ref{eq:exact_equality}) for each evaluated model, covering all quantization variants and benchmarks. Each table column lists the base RMS scale $\rho_T$, proxy RMS scale $\rho_P$, identity-alignment similarity $\Omega$, feature alignment error $\delta$, head discrepancy $\gamma$, full PRISM bound $\mathcal{B}$, and empirical cross-entropy gap $|\Delta\mathcal{R}|$. Per-dataset Spearman rank correlations $r_s(\delta,|\Delta\mathcal{R}|)$ appear in the left-hand column of each benchmark block.

The base-model tables cover the four 8B families: Llama-3.1-8B extended benchmarks in Table~\ref{tab:llama_decomposition_ext_bound}, Ministral-3-8B in Table~\ref{tab:mistral_decomposition_all_bound}, Qwen3-8B in Table~\ref{tab:qwen_decomposition_all_bound} (also referenced in Sec.~\ref{subsec:decompose}), and DeepSeek-R1-Distill-Llama-8B in Table~\ref{tab:deepseek_decomposition_all_bound}. The instruction-tuned counterparts of Llama, Ministral, and Qwen are reported in Tables~\ref{tab:llama_instruct_decomposition_all_bound}, \ref{tab:mistral_instruct_decomposition_all_bound}, and~\ref{tab:qwen_instruct_decomposition_all_bound} respectively, and reproduce the same patterns as their base counterparts. Across ARC/MMLU/SQuAD/TriviaQA the per-benchmark mean Spearman lies in $r_s\in[0.77, 0.89]$ (Table~\ref{tab:gsm8k_outlier}); GSM8K is the weakest benchmark across families ($r_s\approx 0.41$) because its long teacher-forced chain-of-thought answer spans dilute per-token loss, shrinking mean $|\Delta\mathcal{R}|$ to $\approx 0.019$---an order of magnitude below the other benchmarks ($0.07$--$0.16$). For Qwen3-8B-Instruct in particular the mean $|\Delta\mathcal{R}|$ on GSM8K is only $0.0033$; at this scale, per-variant differences are dominated by measurement noise rather than the bound's predictive signal.

\begin{table}[t]
\centering
\caption{Per-benchmark mean empirical risk gap $|\Delta\mathcal{R}|$ (averaged over per-method aggregated quantization rows, matching the per-model body tables) and per-benchmark mean Spearman $r_s(\mathcal{B},|\Delta\mathcal{R}|)$ (over raw variants, matching the figure's per-subplot $r_s$) across seven 8B model families. GSM8K's small $|\Delta\mathcal{R}|$---driven by long teacher-forced answer spans diluting per-token loss---yields a low signal-to-noise ratio that depresses its rank correlation.}
\label{tab:gsm8k_outlier}
\setlength{\tabcolsep}{5pt}

\endgroup

\section{Qwen3-8B forgetting and shape-regularization replications}
\label{app:qwen_forgetting}

This appendix reports the Qwen3-8B replications of Sec.~\ref{subsec:forget_exp} and Sec.~\ref{subsec:shape_reg_exp} along with the remaining trace-norm decomposition tables (Llama BBQ and both Qwen fine-tuning tasks). Qwen3-8B differs from Llama-3.1-8B in depth (36 vs.\ 32 layers), vocabulary size ($151{,}936$ vs.\ $128{,}256$), and pre-training corpus (multilingual, $\sim$36T tokens vs.\ English-dominant, $\sim$15T tokens); reproducing the main-text patterns under these shifts provides cross-family evidence for the claims of Secs.~\ref{subsec:forget_exp}--\ref{subsec:shape_reg_exp}.

\subsection{Forgetting: bound tracks cross-task drift on Qwen3-8B}
\label{app:qwen_forget_grid}

\begin{figure}[h]
    \centering
    \includegraphics[width=1.0\linewidth]{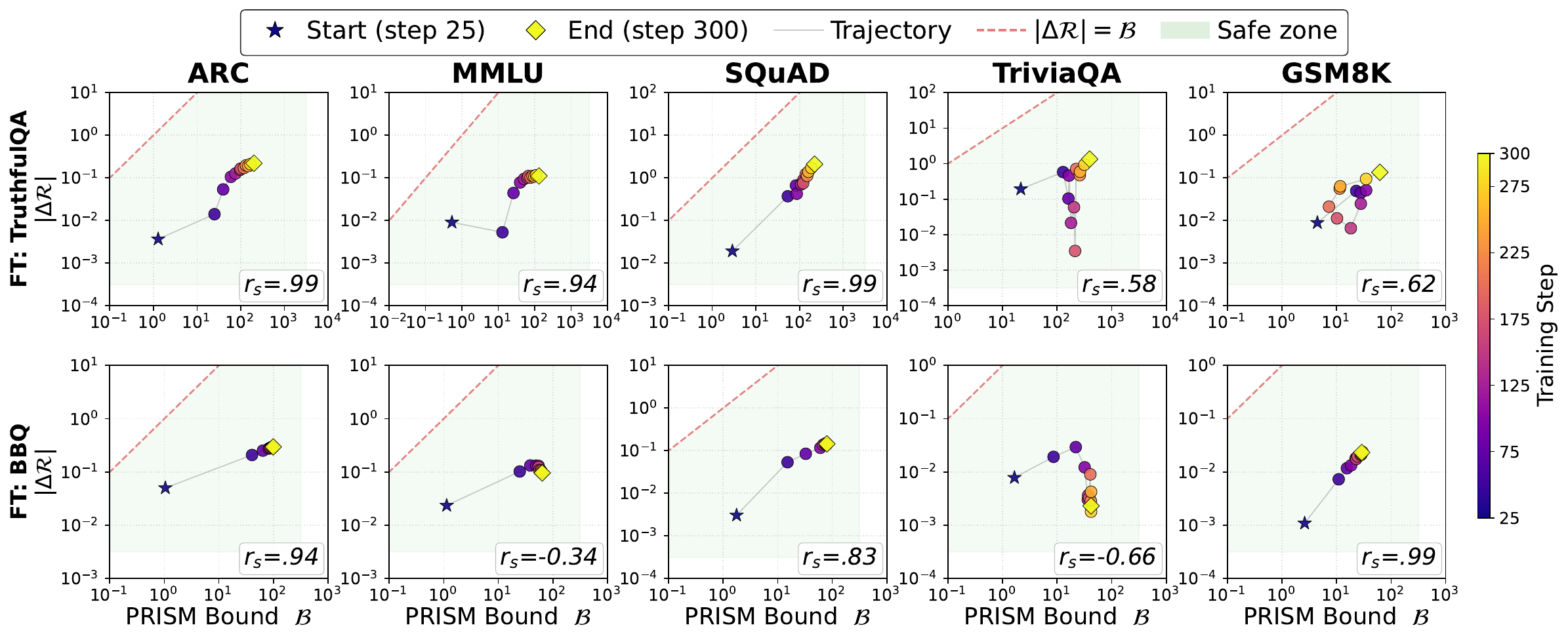}
    \caption{\textbf{Qwen3-8B: the PRISM bound tracks catastrophic forgetting across LoRA fine-tuning steps.} Each subplot scatters the PRISM bound $\mathcal{B}$ (x-axis, log) against the empirical forgetting $|\Delta\mathcal{R}|$ (y-axis, log) on a held-out benchmark, with one point per LoRA checkpoint colored by training step. Rows: fine-tuning task (TruthfulQA, BBQ). Columns: held-out evaluation benchmark (ARC, MMLU, SQuAD, TriviaQA, GSM8K). Because LoRA keeps the \texttt{lm\_head} frozen, the head-discrepancy term $\gamma$ vanishes and the bound is governed entirely by backbone scale ($\Delta\rho$) and shape ($1-\Omega$) drift, as predicted by Sec.~\ref{subsec:forgetting}. Spearman $r_s$ per subplot is annotated in each panel. This replicates Fig.~\ref{fig:forget_grid_llama} on a different family, whose pre-training corpus, depth, and vocabulary size all differ from Llama-3.1-8B.}
    \label{fig:forget_grid_qwen}
\end{figure}

Figure~\ref{fig:forget_grid_qwen} reproduces the two structural observations of the Llama main-text result (Fig.~\ref{fig:forget_grid_llama}) on Qwen3-8B. First, \emph{the PRISM bound rises in lockstep with empirical forgetting across most $(task, benchmark)$ combinations}: as LoRA fine-tuning proceeds on TruthfulQA or BBQ, $\Omega(Z_0, Z_t)$ drifts away from $1$ and $\rho_t$ from $\rho_0$, and this backbone drift tracks the empirical cross-entropy drift on most held-out benchmarks---including benchmarks (e.g., GSM8K) whose distributions differ substantially from the fine-tuning data. Second, \emph{different source tasks induce qualitatively different drift geometries}, matching Llama: TruthfulQA fine-tuning drives predominantly shape drift while BBQ moves both scale and shape, a distinction the decomposition of Eq.~(\ref{eq:lora_bound}) makes visible whereas a unified distance would collapse it. Because Qwen3-8B differs from Llama-3.1-8B across multiple training and architecture axes (pre-training corpus, depth, vocabulary), this replication is direct evidence that the backbone-drift signal is a cross-family property rather than a Llama-specific artifact.

\paragraph{Per-subplot variability under low-forgetting regimes.}
A small subset of subplots in Fig.~\ref{fig:forget_grid_qwen}---most notably Qwen3-8B fine-tuned on BBQ evaluated on MMLU and TriviaQA---shows weak or slightly negative per-subplot $r_s$. These cases share a common signature: $|\Delta\mathcal{R}|$ at $\lambda{=}0$ sits at order $10^{-3}$ nats per token---two to three orders of magnitude below the $\sim 10^{-1}$ nat scale at which the bound is informative across the rest of the grid. Within the same (model, fine-tuning task) pair, this is benchmark-specific: Qwen3-8B BBQ at $\lambda{=}0$ shows $|\Delta\mathcal{R}|{\approx}0.0035$ on TriviaQA versus ${\approx}0.288$ on ARC (Table~\ref{tab:reg_compare_qwen_bbq})---roughly two orders of magnitude apart within a single fine-tuning trajectory. When the forgetting magnitude is at this scale, per-checkpoint ranking becomes uninformative; the resulting $r_s$ reflects checkpoint-trajectory noise rather than a failure of the bound. This pattern is a property of the (model, fine-tuning task, benchmark) triple (Qwen3-8B BBQ on TriviaQA is robust to fine-tuning) rather than the diagnostic. The same noise-dominated effect appears in Llama-3.1-8B BBQ$\to$SQuAD ($r_s\approx 0.30$, Fig.~\ref{fig:forget_grid_llama}): $|\Delta\mathcal{R}|$ saturates early in fine-tuning, so late checkpoints sit at roughly the same level and the per-checkpoint Spearman is driven by checkpoint noise rather than continued drift.

\subsection{Shape regularization: Qwen3-8B replication}
\label{app:qwen_shape_reg}

\begin{figure}[h]
    \centering
    \includegraphics[width=1.0\linewidth]{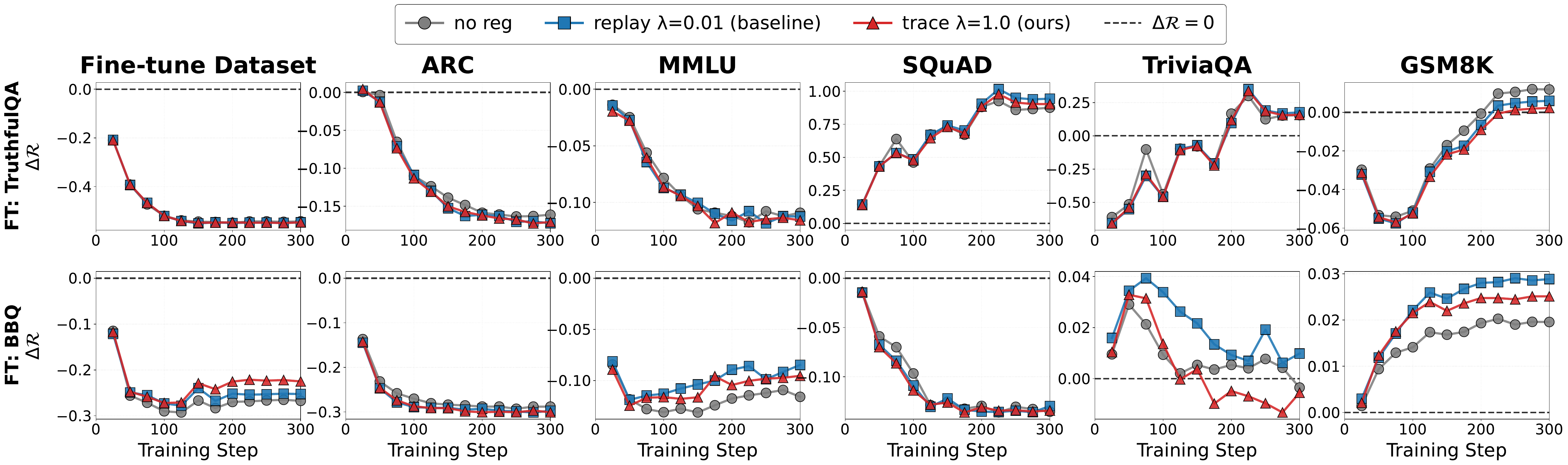}
    \caption{\textbf{Shape regularization vs.\ replay-CE on Qwen3-8B (replication of Fig.~\ref{fig:shape_reg_combined_llama}).} Same three configurations as Llama: \emph{no reg}, \emph{replay} ($\lambda{=}0.01$), and \emph{trace} ($\lambda{=}1.0$). Qwen3-8B's baseline forgetting is already small (mean $|\Delta\mathcal{R}|<0.27$ on both fine-tuning tasks vs.\ Llama's $0.84$ on TruthfulQA), leaving little room for improvement; the trace-vs-replay gap narrows accordingly but the qualitative mechanism (trace lifts $\Omega$, replay does not) holds. Per-config decompositions: Tables~\ref{tab:reg_compare_qwen_truthfulqa} and~\ref{tab:reg_compare_qwen_bbq}.}
    \label{fig:shape_reg_combined_qwen}
\end{figure}

Figure~\ref{fig:shape_reg_combined_qwen} reproduces the main-text Llama comparison (Fig.~\ref{fig:shape_reg_combined_llama}) on Qwen3-8B. Because Qwen3-8B's baseline forgetting is already small, both methods stay near baseline level; the qualitative pattern (trace contracts $\Omega$, replay does not) holds. Per-config decompositions are in Tables~\ref{tab:reg_compare_qwen_truthfulqa} and~\ref{tab:reg_compare_qwen_bbq} below.

\subsection{Additional trace-norm decomposition tables}
\label{app:trace_norm_tables}

Tables~\ref{tab:reg_compare_llama_truthfulqa}--\ref{tab:reg_compare_qwen_bbq} give the full per-$(\text{model}, \text{fine-tuning task})$ decomposition at step $300$ for the three configurations (no reg, replay $\lambda{=}0.01$, trace $\lambda{=}1.0$); the main-text Table~\ref{tab:reg_compact_llama_truthfulqa} compresses the Llama-TruthfulQA combination to $\Omega$ and $|\Delta\mathcal{R}|$ only.

\begin{table}[t]
\centering
\caption{Regularization comparison for \textbf{Llama-3.1-8B} fine-tuned on \textbf{TruthfulQA} under identity alignment ($W{=}I$). Rows group by evaluation benchmark; each benchmark block lists metrics at step 300 for: no reg, replay $\lambda{=}0.01$ (baseline), trace $\lambda{=}1.0$ (ours). Bold / underline: 1st / 2nd-best smallest $|\Delta\mathcal{R}|$ and largest $\Omega$ per block. Shading: \colorbox{red!35}{$\Omega{<}0.80$} / \colorbox{red!18}{$\Omega{<}0.95$} on $(\Omega,\delta,\mathcal{B},|\Delta\mathcal{R}|)$; \colorbox{cyan!12}{$\gamma{=}0$} under frozen \texttt{lm\_head}.}
\label{tab:reg_compare_llama_truthfulqa}
\resizebox{0.95\textwidth}{!}{%
\begin{tabular}{l l rrrrrrr}
\toprule
\multicolumn{1}{c}{\textbf{Dataset}} & \multicolumn{1}{c}{\textbf{Method} ($\boldsymbol{\lambda}$)} & \multicolumn{1}{c}{$\boldsymbol{\rho_T}$} & \multicolumn{1}{c}{$\boldsymbol{\rho_P}$} & \multicolumn{1}{c}{$\boldsymbol{\Omega}$} & \multicolumn{1}{c}{$\boldsymbol{\delta}$} & \multicolumn{1}{c}{$\boldsymbol{\gamma}$} & \multicolumn{1}{c}{\textbf{PRISM} $\boldsymbol{\mathcal{B}}$} & \multicolumn{1}{c}{$\boldsymbol{|\Delta\mathcal{R}|}$} \\
\midrule
ARC & no reg & 137.55 & 137.75 & \cellcolor{red!18} 0.9158 & \cellcolor{red!18} 147.6081 & \cellcolor{cyan!12} $0$ & \cellcolor{red!18} 147.6081 & \cellcolor{red!18} 0.0290 \\
 & replay $\lambda{=}0.01$ (baseline) & 137.55 & 137.35 & \cellcolor{red!18} \underline{0.9185} & \cellcolor{red!18} 145.0853 & \cellcolor{cyan!12} $0$ & \cellcolor{red!18} 145.0853 & \cellcolor{red!18} \textbf{0.0108} \\
 & trace $\lambda{=}1.0$ (ours) & 137.55 & 137.32 & \cellcolor{red!18} \textbf{0.9319} & \cellcolor{red!18} 132.5490 & \cellcolor{cyan!12} $0$ & \cellcolor{red!18} 132.5490 & \cellcolor{red!18} \underline{0.0199} \\
\midrule
MMLU & no reg & 138.91 & 139.50 & \cellcolor{red!18} 0.9426 & \cellcolor{red!18} 123.2722 & \cellcolor{cyan!12} $0$ & \cellcolor{red!18} 123.2722 & \cellcolor{red!18} 0.1339 \\
 & replay $\lambda{=}0.01$ (baseline) & 138.91 & 138.94 & \cellcolor{red!18} \underline{0.9451} & \cellcolor{red!18} 120.3033 & \cellcolor{cyan!12} $0$ & \cellcolor{red!18} 120.3033 & \cellcolor{red!18} \textbf{0.1031} \\
 & trace $\lambda{=}1.0$ (ours) & 138.91 & 139.09 & \textbf{0.9542} & 109.9868 & \cellcolor{cyan!12} $0$ & 109.9868 & \underline{0.1116} \\
\midrule
SQuAD & no reg & 138.20 & 140.07 & \cellcolor{red!18} 0.9231 & \cellcolor{red!18} 142.6639 & \cellcolor{cyan!12} $0$ & \cellcolor{red!18} 142.6639 & \cellcolor{red!18} 1.3372 \\
 & replay $\lambda{=}0.01$ (baseline) & 138.20 & 139.58 & \cellcolor{red!18} \underline{0.9345} & \cellcolor{red!18} 131.4408 & \cellcolor{cyan!12} $0$ & \cellcolor{red!18} 131.4408 & \cellcolor{red!18} \underline{1.1911} \\
 & trace $\lambda{=}1.0$ (ours) & 138.20 & 139.04 & \cellcolor{red!18} \textbf{0.9478} & \cellcolor{red!18} 117.0684 & \cellcolor{cyan!12} $0$ & \cellcolor{red!18} 117.0684 & \cellcolor{red!18} \textbf{1.0543} \\
\midrule
TriviaQA & no reg & 140.93 & 146.85 & \cellcolor{red!35} 0.7593 & \cellcolor{red!35} 261.3595 & \cellcolor{cyan!12} $0$ & \cellcolor{red!35} 261.3595 & \cellcolor{red!35} 2.5829 \\
 & replay $\lambda{=}0.01$ (baseline) & 140.93 & 146.18 & \cellcolor{red!35} \underline{0.7810} & \cellcolor{red!35} 248.6873 & \cellcolor{cyan!12} $0$ & \cellcolor{red!35} 248.6873 & \cellcolor{red!35} \underline{2.3879} \\
 & trace $\lambda{=}1.0$ (ours) & 140.93 & 145.10 & \cellcolor{red!18} \textbf{0.8210} & \cellcolor{red!18} 223.8704 & \cellcolor{cyan!12} $0$ & \cellcolor{red!18} 223.8704 & \cellcolor{red!18} \textbf{2.1242} \\
\midrule
GSM8K & no reg & 137.15 & 138.32 & 0.9905 & 49.6934 & \cellcolor{cyan!12} $0$ & 49.6934 & 0.1342 \\
 & replay $\lambda{=}0.01$ (baseline) & 137.15 & 138.01 & \underline{0.9957} & 33.3099 & \cellcolor{cyan!12} $0$ & 33.3099 & \underline{0.1261} \\
 & trace $\lambda{=}1.0$ (ours) & 137.15 & 137.65 & \textbf{1.0000} & 1.3192 & \cellcolor{cyan!12} $0$ & 1.3192 & \textbf{0.0966} \\
\bottomrule
\end{tabular}}
\end{table}

\begin{table}[t]
\centering
\caption{Regularization comparison for \textbf{Llama-3.1-8B} fine-tuned on \textbf{BBQ} under identity alignment ($W{=}I$). Rows group by evaluation benchmark; each benchmark block lists metrics at step 300 for: no reg, replay $\lambda{=}0.01$ (baseline), trace $\lambda{=}1.0$ (ours). Bold / underline: 1st / 2nd-best smallest $|\Delta\mathcal{R}|$ and largest $\Omega$ per block. Shading: \colorbox{red!35}{$\Omega{<}0.80$} / \colorbox{red!18}{$\Omega{<}0.95$} on $(\Omega,\delta,\mathcal{B},|\Delta\mathcal{R}|)$; \colorbox{cyan!12}{$\gamma{=}0$} under frozen \texttt{lm\_head}.}
\label{tab:reg_compare_llama_bbq}
\resizebox{0.95\textwidth}{!}{%
\begin{tabular}{l l rrrrrrr}
\toprule
\multicolumn{1}{c}{\textbf{Dataset}} & \multicolumn{1}{c}{\textbf{Method} ($\boldsymbol{\lambda}$)} & \multicolumn{1}{c}{$\boldsymbol{\rho_T}$} & \multicolumn{1}{c}{$\boldsymbol{\rho_P}$} & \multicolumn{1}{c}{$\boldsymbol{\Omega}$} & \multicolumn{1}{c}{$\boldsymbol{\delta}$} & \multicolumn{1}{c}{$\boldsymbol{\gamma}$} & \multicolumn{1}{c}{\textbf{PRISM} $\boldsymbol{\mathcal{B}}$} & \multicolumn{1}{c}{$\boldsymbol{|\Delta\mathcal{R}|}$} \\
\midrule
ARC & no reg & 137.55 & 133.78 & \cellcolor{red!18} \underline{0.8804} & \cellcolor{red!18} 173.7045 & \cellcolor{cyan!12} $0$ & \cellcolor{red!18} 173.7045 & \cellcolor{red!18} \textbf{0.0775} \\
 & replay $\lambda{=}0.01$ (baseline) & 137.55 & 135.25 & \cellcolor{red!18} 0.8802 & \cellcolor{red!18} 174.6273 & \cellcolor{cyan!12} $0$ & \cellcolor{red!18} 174.6273 & \cellcolor{red!18} 0.1799 \\
 & trace $\lambda{=}1.0$ (ours) & 137.55 & 137.06 & \textbf{0.9762} & 78.3406 & \cellcolor{cyan!12} $0$ & 78.3406 & \underline{0.1536} \\
\midrule
MMLU & no reg & 138.91 & 135.04 & \cellcolor{red!18} 0.8744 & \cellcolor{red!18} 179.6944 & \cellcolor{cyan!12} $0$ & \cellcolor{red!18} 179.6944 & \cellcolor{red!18} \textbf{0.3531} \\
 & replay $\lambda{=}0.01$ (baseline) & 138.91 & 136.63 & \cellcolor{red!18} \underline{0.8766} & \cellcolor{red!18} 178.9974 & \cellcolor{cyan!12} $0$ & \cellcolor{red!18} 178.9974 & \cellcolor{red!18} 0.6061 \\
 & trace $\lambda{=}1.0$ (ours) & 138.91 & 138.66 & \textbf{0.9758} & 79.8590 & \cellcolor{cyan!12} $0$ & 79.8590 & \underline{0.4995} \\
\midrule
SQuAD & no reg & 138.20 & 139.04 & \cellcolor{red!18} 0.9281 & \cellcolor{red!18} 137.3737 & \cellcolor{cyan!12} $0$ & \cellcolor{red!18} 137.3737 & \cellcolor{red!18} \underline{0.2608} \\
 & replay $\lambda{=}0.01$ (baseline) & 138.20 & 139.16 & \cellcolor{red!18} \underline{0.9366} & \cellcolor{red!18} 129.0819 & \cellcolor{cyan!12} $0$ & \cellcolor{red!18} 129.0819 & \cellcolor{red!18} \textbf{0.2127} \\
 & trace $\lambda{=}1.0$ (ours) & 138.20 & 138.34 & \textbf{0.9694} & 89.4630 & \cellcolor{cyan!12} $0$ & 89.4630 & 0.2903 \\
\midrule
TriviaQA & no reg & 140.93 & 141.47 & 0.9780 & 77.4262 & \cellcolor{cyan!12} $0$ & 77.4262 & \underline{0.1435} \\
 & replay $\lambda{=}0.01$ (baseline) & 140.93 & 141.60 & \underline{0.9784} & 76.6896 & \cellcolor{cyan!12} $0$ & 76.6896 & 0.1506 \\
 & trace $\lambda{=}1.0$ (ours) & 140.93 & 140.99 & \textbf{0.9955} & 34.9074 & \cellcolor{cyan!12} $0$ & 34.9074 & \textbf{0.0171} \\
\midrule
GSM8K & no reg & 137.15 & 137.41 & \textbf{1.0000} & 0.6918 & \cellcolor{cyan!12} $0$ & 0.6918 & 0.0609 \\
 & replay $\lambda{=}0.01$ (baseline) & 137.15 & 137.41 & \underline{1.0000} & 0.6895 & \cellcolor{cyan!12} $0$ & 0.6895 & \underline{0.0581} \\
 & trace $\lambda{=}1.0$ (ours) & 137.15 & 137.10 & 1.0000 & 0.1255 & \cellcolor{cyan!12} $0$ & 0.1255 & \textbf{0.0126} \\
\bottomrule
\end{tabular}}
\end{table}

\begin{table}[t]
\centering
\caption{Regularization comparison for \textbf{Qwen3-8B-Base} fine-tuned on \textbf{TruthfulQA} under identity alignment ($W{=}I$). Rows group by evaluation benchmark; each benchmark block lists metrics at step 300 for: no reg, replay $\lambda{=}0.01$ (baseline), trace $\lambda{=}1.0$ (ours). Bold / underline: 1st / 2nd-best smallest $|\Delta\mathcal{R}|$ and largest $\Omega$ per block. Shading: \colorbox{red!35}{$\Omega{<}0.80$} / \colorbox{red!18}{$\Omega{<}0.95$} on $(\Omega,\delta,\mathcal{B},|\Delta\mathcal{R}|)$; \colorbox{cyan!12}{$\gamma{=}0$} under frozen \texttt{lm\_head}.}
\label{tab:reg_compare_qwen_truthfulqa}
\resizebox{0.95\textwidth}{!}{%
\begin{tabular}{l l rrrrrrr}
\toprule
\multicolumn{1}{c}{\textbf{Dataset}} & \multicolumn{1}{c}{\textbf{Method} ($\boldsymbol{\lambda}$)} & \multicolumn{1}{c}{$\boldsymbol{\rho_T}$} & \multicolumn{1}{c}{$\boldsymbol{\rho_P}$} & \multicolumn{1}{c}{$\boldsymbol{\Omega}$} & \multicolumn{1}{c}{$\boldsymbol{\delta}$} & \multicolumn{1}{c}{$\boldsymbol{\gamma}$} & \multicolumn{1}{c}{\textbf{PRISM} $\boldsymbol{\mathcal{B}}$} & \multicolumn{1}{c}{$\boldsymbol{|\Delta\mathcal{R}|}$} \\
\midrule
ARC & no reg & 332.98 & 327.46 & \underline{0.9963} & 100.0550 & \cellcolor{cyan!12} $0$ & 100.0550 & \textbf{0.1612} \\
 & replay $\lambda{=}0.01$ (baseline) & 332.98 & 327.21 & 0.9959 & 105.4953 & \cellcolor{cyan!12} $0$ & 105.4953 & 0.1725 \\
 & trace $\lambda{=}1.0$ (ours) & 332.98 & 327.29 & \textbf{0.9965} & 97.1908 & \cellcolor{cyan!12} $0$ & 97.1908 & \underline{0.1708} \\
\midrule
MMLU & no reg & 333.53 & 328.76 & \underline{0.9984} & 67.4051 & \cellcolor{cyan!12} $0$ & 67.4051 & \textbf{0.1092} \\
 & replay $\lambda{=}0.01$ (baseline) & 333.53 & 328.53 & 0.9981 & 71.7638 & \cellcolor{cyan!12} $0$ & 71.7638 & \underline{0.1125} \\
 & trace $\lambda{=}1.0$ (ours) & 333.53 & 328.70 & \textbf{0.9985} & 64.5258 & \cellcolor{cyan!12} $0$ & 64.5258 & 0.1158 \\
\midrule
SQuAD & no reg & 297.68 & 292.04 & \underline{0.9930} & 122.0758 & \cellcolor{cyan!12} $0$ & 122.0758 & \textbf{0.8757} \\
 & replay $\lambda{=}0.01$ (baseline) & 297.68 & 292.31 & 0.9926 & 125.2384 & \cellcolor{cyan!12} $0$ & 125.2384 & 0.9437 \\
 & trace $\lambda{=}1.0$ (ours) & 297.68 & 293.13 & \textbf{0.9932} & 120.3011 & \cellcolor{cyan!12} $0$ & 120.3011 & \underline{0.9025} \\
\midrule
TriviaQA & no reg & 240.42 & 224.70 & \underline{0.9667} & 214.4331 & \cellcolor{cyan!12} $0$ & 214.4331 & \textbf{0.1552} \\
 & replay $\lambda{=}0.01$ (baseline) & 240.42 & 223.97 & 0.9647 & 220.7355 & \cellcolor{cyan!12} $0$ & 220.7355 & 0.1767 \\
 & trace $\lambda{=}1.0$ (ours) & 240.42 & 225.12 & \textbf{0.9674} & 212.1221 & \cellcolor{cyan!12} $0$ & 212.1221 & \underline{0.1575} \\
\midrule
GSM8K & no reg & 269.06 & 271.54 & \textbf{1.0000} & 8.5518 & \cellcolor{cyan!12} $0$ & 8.5518 & 0.0118 \\
 & replay $\lambda{=}0.01$ (baseline) & 269.06 & 272.36 & \underline{1.0000} & 11.3864 & \cellcolor{cyan!12} $0$ & 11.3864 & \underline{0.0060} \\
 & trace $\lambda{=}1.0$ (ours) & 269.06 & 272.50 & 1.0000 & 11.8835 & \cellcolor{cyan!12} $0$ & 11.8835 & \textbf{0.0023} \\
\bottomrule
\end{tabular}}
\end{table}

\begin{table}[t]
\centering
\caption{Regularization comparison for \textbf{Qwen3-8B-Base} fine-tuned on \textbf{BBQ} under identity alignment ($W{=}I$). Rows group by evaluation benchmark; each benchmark block lists metrics at step 300 for: no reg, replay $\lambda{=}0.01$ (baseline), trace $\lambda{=}1.0$ (ours). Bold / underline: 1st / 2nd-best smallest $|\Delta\mathcal{R}|$ and largest $\Omega$ per block. Shading: \colorbox{red!35}{$\Omega{<}0.80$} / \colorbox{red!18}{$\Omega{<}0.95$} on $(\Omega,\delta,\mathcal{B},|\Delta\mathcal{R}|)$; \colorbox{cyan!12}{$\gamma{=}0$} under frozen \texttt{lm\_head}.}
\label{tab:reg_compare_qwen_bbq}
\resizebox{0.95\textwidth}{!}{%
\begin{tabular}{l l rrrrrrr}
\toprule
\multicolumn{1}{c}{\textbf{Dataset}} & \multicolumn{1}{c}{\textbf{Method} ($\boldsymbol{\lambda}$)} & \multicolumn{1}{c}{$\boldsymbol{\rho_T}$} & \multicolumn{1}{c}{$\boldsymbol{\rho_P}$} & \multicolumn{1}{c}{$\boldsymbol{\Omega}$} & \multicolumn{1}{c}{$\boldsymbol{\delta}$} & \multicolumn{1}{c}{$\boldsymbol{\gamma}$} & \multicolumn{1}{c}{\textbf{PRISM} $\boldsymbol{\mathcal{B}}$} & \multicolumn{1}{c}{$\boldsymbol{|\Delta\mathcal{R}|}$} \\
\midrule
ARC & no reg & 332.98 & 325.46 & \underline{0.9976} & 82.4401 & \cellcolor{cyan!12} $0$ & 82.4401 & \textbf{0.2881} \\
 & replay $\lambda{=}0.01$ (baseline) & 332.98 & 324.38 & 0.9967 & 96.6146 & \cellcolor{cyan!12} $0$ & 96.6146 & \underline{0.2975} \\
 & trace $\lambda{=}1.0$ (ours) & 332.98 & 327.55 & \textbf{0.9987} & 60.7069 & \cellcolor{cyan!12} $0$ & 60.7069 & 0.3009 \\
\midrule
MMLU & no reg & 333.53 & 328.73 & \underline{0.9991} & 52.3490 & \cellcolor{cyan!12} $0$ & 52.3490 & 0.1159 \\
 & replay $\lambda{=}0.01$ (baseline) & 333.53 & 328.17 & 0.9987 & 61.9216 & \cellcolor{cyan!12} $0$ & 61.9216 & \textbf{0.0847} \\
 & trace $\lambda{=}1.0$ (ours) & 333.53 & 330.08 & \textbf{0.9992} & 46.8071 & \cellcolor{cyan!12} $0$ & 46.8071 & \underline{0.0954} \\
\midrule
SQuAD & no reg & 297.68 & 286.65 & \underline{0.9985} & 67.1415 & \cellcolor{cyan!12} $0$ & 67.1415 & 0.1353 \\
 & replay $\lambda{=}0.01$ (baseline) & 297.68 & 285.88 & 0.9982 & 72.9559 & \cellcolor{cyan!12} $0$ & 72.9559 & \textbf{0.1296} \\
 & trace $\lambda{=}1.0$ (ours) & 297.68 & 287.67 & \textbf{0.9991} & 55.9438 & \cellcolor{cyan!12} $0$ & 55.9438 & \underline{0.1342} \\
\midrule
TriviaQA & no reg & 240.42 & 235.59 & \underline{0.9992} & 36.1152 & \cellcolor{cyan!12} $0$ & 36.1152 & \textbf{0.0035} \\
 & replay $\lambda{=}0.01$ (baseline) & 240.42 & 235.16 & 0.9989 & 42.8307 & \cellcolor{cyan!12} $0$ & 42.8307 & 0.0098 \\
 & trace $\lambda{=}1.0$ (ours) & 240.42 & 236.13 & \textbf{0.9994} & 33.0855 & \cellcolor{cyan!12} $0$ & 33.0855 & \underline{0.0055} \\
\midrule
GSM8K & no reg & 269.06 & 261.82 & \textbf{1.0000} & 25.0486 & \cellcolor{cyan!12} $0$ & 25.0486 & \textbf{0.0196} \\
 & replay $\lambda{=}0.01$ (baseline) & 269.06 & 260.60 & \underline{1.0000} & 29.2693 & \cellcolor{cyan!12} $0$ & 29.2693 & 0.0288 \\
 & trace $\lambda{=}1.0$ (ours) & 269.06 & 262.96 & 1.0000 & 21.1073 & \cellcolor{cyan!12} $0$ & 21.1073 & \underline{0.0251} \\
\bottomrule
\end{tabular}}
\end{table}

\section{When shape regularization helps: task-dependence}
\label{app:reg_task_dependence}

TruthfulQA-FT, where shape drift dominates downstream forgetting, illustrates the regularizer's intended use case: trace cuts mean $|\Delta\mathcal{R}|$ from 0.843 (no reg) to 0.681 ($-19\%$), outperforming experience replay (0.764, $-9\%$). BBQ-FT serves as a robustness check in the small-$|\Delta\mathcal{R}|$ regime: averaging the per-benchmark $|\Delta\mathcal{R}|$ values from Table~\ref{tab:reg_compare_llama_bbq} gives means of 0.179 (no reg), 0.195 (trace), and 0.241 (replay)---substantially smaller than TruthfulQA's 0.843---reflecting BBQ's modest baseline shape drift (mean baseline $\Omega = 0.932$ vs $0.906$). Trace outperforms replay; per-benchmark substantive wins (detailed below) persist on shape-driven cells.

\paragraph{Geometric mechanism remains intact.}
On BBQ, the contrast between trace and experience replay is most visible in the geometric signal: trace lifts $\Omega$ from 0.93 to 0.98 (a $73\%$ reduction in $1-\Omega$) while experience replay leaves $\Omega$ flat (0.93 $\to$ 0.93). The regularizer's targeted quantity moves as designed.

\paragraph{Per-benchmark wins on shape-driven cells.}
Trace's largest $|\Delta\mathcal{R}|$ reductions appear where forgetting is substantive: on BBQ-FT (Table~\ref{tab:reg_compare_llama_bbq}), TriviaQA $|\Delta\mathcal{R}|$ drops from 0.143 to 0.017 ($-88\%$) and GSM8K from 0.061 to 0.013 ($-79\%$); on TruthfulQA-FT (Table~\ref{tab:reg_compact_llama_truthfulqa}), SQuAD from 1.337 to 1.054 ($-21\%$) and TriviaQA from 2.583 to 2.124 ($-18\%$). On BBQ-FT ARC and MMLU, the no-reg condition shows substantial shape drift ($\Omega$ drops to 0.88, 0.87) with relatively small $|\Delta\mathcal{R}|$ (0.077, 0.353): the no-reg evidence indicates shape drift does not translate proportionally into $|\Delta\mathcal{R}|$ growth on these cells, so trace's shape restoration lacks a proportional $|\Delta\mathcal{R}|$ target. These cells' shorter answer spans further compress the per-cell CE dynamic range, limiting the observable signal-to-noise of any regularization effect; when absolute $|\Delta\mathcal{R}|$ is small, per-sample CE noise forms a relatively larger fraction of the signal. Rank correlation across the 2$\times$5 grid (Sec.~\ref{subsec:predict}, mean Spearman $\approx 0.83$) provides the scale-invariant calibration of the bound's predictive power.

\paragraph{When to apply: empirical validation of the gating signal.}
Shape regularization is indicated when (i) $\Omega$ drifts substantially during fine-tuning and (ii) downstream forgetting is shape-dominated; PRISM's decomposition (Sec.~\ref{subsec:decompose}) provides both signals \emph{before} regularization is applied. To quantify how well the gating prediction matches the empirical trace effect, we compute two per-setting scalars across the four $(\text{model}, \text{fine-tuning task})$ combinations, both at step $300$ on the same five downstream benchmarks:
\begin{itemize}[leftmargin=*,nosep,topsep=2pt]
    \item \textbf{Baseline shape-drift signal} $1{-}\bar{\Omega} := \tfrac{1}{5}\sum_b (1-\Omega_b^{\lambda=0})$, the across-benchmark mean shape mismatch under no regularization, with per-benchmark $\Omega_b^{\lambda=0}$ read from the no-reg rows of Tables~\ref{tab:reg_compare_llama_truthfulqa}--\ref{tab:reg_compare_qwen_bbq}. Larger $1{-}\bar{\Omega}$ = larger headroom under condition~(i).
    \item \textbf{Relative trace effect} $\Delta|\Delta\mathcal{R}|/|\Delta\mathcal{R}|_0 := (\overline{|\Delta\mathcal{R}|}_{\mathrm{trace}} - \overline{|\Delta\mathcal{R}|}_{\lambda=0})/\overline{|\Delta\mathcal{R}|}_{\lambda=0}$: across-benchmark mean of the trace-vs-no-reg relative change in empirical forgetting at $\lambda{=}1.0$ (negative $=$ trace helps; positive $=$ trace hurts).
\end{itemize}

\begin{table}[h]
\centering
\small
\setlength{\tabcolsep}{8pt}
\caption{Diagnostic-gating prediction vs.\ empirical trace effect across the four $(\text{model}, \text{fine-tuning task})$ settings, ordered by decreasing $1{-}\bar{\Omega}$. Trace yields the largest $|\Delta\mathcal{R}|$ reduction where shape drift is largest (Llama TruthfulQA), is essentially neutral on both Qwen settings where $1{-}\bar{\Omega}$ sits at the noise floor, and shows a partial cell-level exception on Llama BBQ explained in the next paragraph.}
\label{tab:reg_gating}
\begin{tabular}{l c c l}
\toprule
\multicolumn{1}{c}{\textbf{Setting}} & $\boldsymbol{1{-}\bar{\Omega}}$ & $\boldsymbol{\Delta|\Delta\mathcal{R}|/|\Delta\mathcal{R}|_0}$ & \multicolumn{1}{c}{\textbf{Gating verdict}} \\
\midrule
Llama TruthfulQA & $0.0937$ & $-19.2\%$ & shape-driven $\Rightarrow$ apply \\
Llama BBQ        & $0.0678$ & $+8.6\%$  & cell-level mixed (see below) \\
Qwen TruthfulQA  & $0.0091$ & $+2.7\%$  & at noise floor $\Rightarrow$ skip \\
Qwen BBQ         & $0.0011$ & $-0.2\%$  & at noise floor $\Rightarrow$ skip \\
\bottomrule
\end{tabular}
\end{table}

Three of four settings match the gating prediction directly: trace produces the largest benefit where $1{-}\bar{\Omega}$ is largest (Llama TruthfulQA: $-19.2\%$), and is essentially neutral (magnitude $<3\%$) on both Qwen settings where $1{-}\bar{\Omega}$ sits at the noise floor. Llama BBQ is the partial exception---its aggregate $1{-}\bar{\Omega}=0.0678$ exceeds the noise floor (condition~(i) satisfied), but the drift is concentrated on ARC and MMLU where condition~(ii) is violated: substantial shape drift accompanies small $|\Delta\mathcal{R}|$ growth, so trace's shape restoration lacks a proportional $|\Delta\mathcal{R}|$ target (per-benchmark paragraph above). On the two Llama-BBQ benchmarks where condition~(ii) does hold (TriviaQA and GSM8K), trace yields $-88\%$ and $-79\%$ reductions respectively, so the aggregate $+8.6\%$ reflects mixed cell-level behavior rather than a failure of the shape-drift gating itself. Adaptive per-cell deployment of the regularizer---driven by PRISM's per-benchmark decomposition online---is the natural extension and is left to future work, alongside the short-answer fine-tuning regimes noted earlier in this appendix.

\paragraph{Empirical confirmation: Qwen3-8B replication.}
The Qwen3-8B replication (Tables~\ref{tab:reg_compare_qwen_truthfulqa}--\ref{tab:reg_compare_qwen_bbq}, Fig.~\ref{fig:shape_reg_combined_qwen}) extends the regime-dependence story from \emph{tasks} (TruthfulQA-FT vs.\ BBQ-FT on Llama) to \emph{models}, providing direct empirical evidence for condition~(i) above. Qwen3-8B is markedly more robust to LoRA forgetting on these two fine-tuning sources than Llama-3.1-8B: across-benchmark mean $|\Delta\mathcal{R}|$ at $\lambda{=}0$ is $0.263$ on Qwen TruthfulQA-FT vs.\ Llama's $0.843$ ($3.2\times$ smaller), and $0.112$ on Qwen BBQ-FT vs.\ Llama's $0.179$. Baseline shape preservation is correspondingly closer to its ceiling on Qwen ($\Omega$ at $\lambda{=}0$: $0.991$ on Qwen TruthfulQA-FT, $0.999$ on Qwen BBQ-FT vs.\ Llama's $0.906$ and $0.932$), leaving the shape regularizer with essentially no drift to repair: $1-\Omega$ collapses from Llama's $0.094$/$0.068$ to Qwen's $0.009$/$0.001$, an order of magnitude smaller. The differences across no-reg, replay, and trace correspondingly shrink to the evaluation-noise floor: across-benchmark mean $|\Delta\mathcal{R}|$ on Qwen TruthfulQA-FT is $0.263$ / $0.282$ / $0.270$ (no-reg / replay / trace), and on Qwen BBQ-FT is $0.112$ / $0.110$ / $0.112$, with method-to-method spread under $0.02$ on both. This is the same low-SNR phenomenon documented for GSM8K under PTQ in Appendix~\ref{app:per_model_tables} (Table~\ref{tab:gsm8k_outlier}), where the bound's per-benchmark mean Spearman drops from $\ge 0.77$ on ARC/MMLU/SQuAD/TriviaQA to $\approx 0.41$ on GSM8K precisely because $|\Delta\mathcal{R}|$ collapses to $\approx 0.019$. We report the Qwen3-8B regularization numbers in full for transparency and reproducibility, but emphasize that the shape regularizer's mechanism---contracting $1-\Omega$ at the source of LoRA backbone drift---can demonstrate a measurable benefit only when $1-\Omega$ has substantive headroom to contract. Qwen3-8B fine-tuning on TruthfulQA/BBQ violates condition~(i) above, and the Qwen3-8B numbers therefore do not contradict the Llama TruthfulQA-FT result; they confirm the regime-dependence the bound itself predicts.

\section{Future work}
\label{app:future_work}

\begin{itemize}[leftmargin=*]
    \item \textbf{Beyond LoRA forgetting.} PRISM's diagnostic and regularization roles (Sec.~\ref{subsec:action}) extend naturally to broader fine-tuning---full SFT, distillation, continual learning---where backbone drift is substantially larger. The shape regularizer constrains drift; $\Omega$ and $\Delta\rho$ together monitor it per step in ways validation loss does not.
    \item \textbf{Diagnostic applications.} The single-forward-pass computation of $\mathcal{B}$ enables three deployment uses: \emph{per-sample OOD detection} via scale and shape residuals on individual inputs; \emph{cross-scale hyperparameter transfer} between a small proxy and its scaled target to predict tuning robustness; and \emph{production drift monitoring} on live traffic.
    \item \textbf{Beyond LLMs.} Vision Transformers and contrastive image encoders share LLMs' backbone-then-linear-head structure, with final-layer LayerNorm (and strict L2 normalization in CLIP/SigLIP) enforcing tight feature scales---making the PRISM scale axis naturally small. Natural targets include ViT quantization, vision-encoder distillation, and cross-modal alignment (consistent with the Platonic Representation Hypothesis~\cite{huh2024platonic}).
\end{itemize}

\clearpage

\ifarxiv
\else
\newpage
\input{checklist.tex}
\fi

\end{document}